\documentclass{article}

\PassOptionsToPackage{numbers, compress}{natbib}

 \usepackage[preprint]{neurips_2026}

\usepackage[utf8]{inputenc} %
\usepackage[T1]{fontenc}    %
\usepackage{hyperref}       %
\usepackage{url}            %
\usepackage{booktabs}       %
\usepackage{longtable}      %
\usepackage{amsfonts}       %
\usepackage{nicefrac}       %
\usepackage{microtype}      %
\usepackage{xcolor}         %
\usepackage{graphicx}
\usepackage{amsmath}
\usepackage{enumitem}
\usepackage{algorithm}
\usepackage{algpseudocode}
\usepackage{adjustbox}
\usepackage{amsmath}
\usepackage{multicol}
\usepackage{multirow}
\usepackage{rotating}
\usepackage{titletoc}
  
\usepackage{xspace} %
\newcommand{\cmark}{\ding{51}}
\newcommand{\xmark}{\ding{55}}
\usepackage{tcolorbox}
\tcbuselibrary{breakable, skins, raster, listings}

\newtcblisting{promptbox}[1]{%
  enhanced, breakable,
  colback=codebg, colframe=gray!60,
  coltitle=white, fonttitle=\bfseries\small,
  title={#1},
  boxrule=0.5pt, arc=2pt,
  left=6pt, right=6pt, top=2pt, bottom=2pt,
  listing only,
  listing options={%
    basicstyle=\footnotesize\ttfamily,
    breaklines=true,
    columns=fullflexible,
    keepspaces=true,
    aboveskip=0pt,
    belowskip=0pt,
  },
}
\definecolor{caseheader}{RGB}{159, 225, 203}    %
\definecolor{casebg}{RGB}{225, 245, 238}         %
\definecolor{casetext}{RGB}{4, 52, 44}           %
\definecolor{altheader}{RGB}{211, 209, 199}      %
\definecolor{altbg}{RGB}{241, 239, 232}          %
\definecolor{alttext}{RGB}{44, 44, 42}           %
\usepackage{pifont}
\usepackage{colortbl}
\definecolor{ourshighlight}{RGB}{225, 240, 252}
\definecolor{categoryheader}{RGB}{244, 243, 240}
\newcommand{\method}{\textsc{DrugSAGE}\xspace}

\usepackage{amsthm, amssymb}
\usepackage{mathtools}
\usepackage{booktabs}
\usepackage{array}
\usepackage{xcolor}
\usepackage{hyperref}
\usepackage{enumitem}
\usepackage{thmtools}
\usepackage{cleveref}
\newtheorem{theorem}{Theorem}[section]
\newtheorem{lemma}[theorem]{Lemma}

\newtheorem{assumption}[theorem]{Assumption}

\usepackage{listings}
\usepackage{xcolor} 
\usepackage{listings}
\usepackage{xcolor}
\usepackage{amsmath, amssymb, amsthm}
\usepackage{cleveref}
\usepackage{enumitem}
\usepackage{xcolor}
\usepackage{minted}
\usepackage{graphicx}
\usepackage{subcaption}
\definecolor{codebg}{rgb}{0.97,0.97,0.95}

\usepackage{xcolor}
\usepackage{listings}

\newcommand{\CEG}{\mathrm{CEG}}
\newcommand{\E}{\mathbb{E}}

\newcommand{\calF}{\mathcal{F}}

\title{\method: Self-evolving Agent Experience for Efficient State-of-the-Art Drug Discovery}

\author{%
Yikun Zhang$^{1,2}$\thanks{Equal contribution.} \qquad
Xiwei Cheng$^{1,2}$\footnotemark[1] \qquad
Tianyu Liu$^{3}$\footnotemark[1] \qquad
Yuanqi Du$^{4}$ \qquad
Wengong Jin$^{1,2}$\\[8pt]
$^{1}$Northeastern University \qquad
$^{2}$Broad Institute of MIT and Harvard\\
$^{3}$Yale University \qquad
$^{4}$Microsoft Research New England
}

\begin{document}

\maketitle

\begin{abstract}
Building state-of-the-art (SOTA) predictive models for drug discovery requires expensive search over tools, architectures, and training strategies. Current LLM-based agents can find SOTA solutions through extensive trial and error, but they do not retain the experience accumulated along the way and therefore pay the full search cost on every new task. We propose \method (Self-evolving Agent Experience), a framework that accumulates and reuses experience across tasks to build SOTA drug discovery models efficiently. \method maintains a cross-task memory of verified skills, statistical evidence about effective strategies, and a record of recurring errors and their fixes. In some cases, \method transfers a working solution directly without test-time search. In 33 molecular property prediction tasks, \method ranks first among nine SOTA agents in a single-task setting. With memory accumulated from 16 smaller tasks, \method achieves an averaged normalized score of 0.935 on 17 held-out tasks in a cross-task evaluation setting and outperforms all baseline agents by 10-30\% in a zero-test-time search regime. In summary, our work shows the advantage of cross-task memory for efficient SOTA model development in drug discovery.
\end{abstract}

\section{Introduction}
Autonomous agents powered by large language models are increasingly capable of performing scientific research tasks end-to-end~\citep{wei2025ai}. In biomedicine and drug discovery, systems such as Biomni~\citep{huang2025biomni} and STELLA~\citep{jin2025stella} have demonstrated that LLM-based agents can automatically build workflows for data processing, feature extraction, and predictive modeling. More recent efforts such as AutoResearch~\citep{karpathy_autoresearch_2026} go one step further, moving from assembling a functional pipeline to actively searching over model architectures, training recipes, and preprocessing strategies to achieve state-of-the-art (SOTA) performance on a given benchmark. Achieving SOTA accuracy is especially critical for drug discovery because inaccurate predictions translate directly into experimental failures, which costs thousands of dollars and months of effort.

However, building SOTA models is an expensive search problem~\citep{liaw2018tune}. The agent must identify relevant tools and codebases from the literature, adapt them to the dataset at hand, tune training configurations and hyperparameters, and iterate through rounds of trial and error, all of which consume substantial compute and token budgets. This cost grows further when datasets become large, foundation models are expensive to fine-tune, and a single training run takes hours or days \citep{bai2024beyond}. Slow error feedback makes this iterative search loop on which current agents rely increasingly impractical. Critically, most of the existing agents pay for this full cost independently on every new task, discarding all search experience before the next one begins. As a result, their ability to achieve SOTA performance has been difficult to scale across thousands of diverse prediction problems in drug discovery.

In this work, we propose \method (Self-evolving AGent Experience), a framework that accumulates and reuses experience across tasks to build SOTA solution efficiently. 
The key observation behind \method is that many drug discovery tasks share structural similarities overlooked by current agents. Predicting solubility, binding affinity, and bioactivity all involve the same input/output structure and differ only in their labels, data set sizes, and evaluation metrics. When an agent discovers that a particular molecular featurization performs well or a specific training recipe reliably improves a class of models, there is no reason to discard that knowledge.
Therefore, \method treats each task not as an isolated problem, but as an experience that enriches the agent for future tasks. It maintains a cross-task memory that accumulates verified skills, statistical evidence about which strategies generally work better, and a record of recurring failure modes and their fixes.
As this memory grows, the agent's search narrows: on a new task, instead of exploring the full space of tools, architectures, and hyperparameters from scratch, \method draws on prior experience to prioritize approaches that are likely to succeed — and in some cases, transfers a working solution directly with no additional search at all. 

We evaluate \method on two benchmarks~\citep{huang2021therapeutics,polaris_hub} with 33 drug-property prediction tasks spanning absorption, distribution, metabolism, excretion, toxicity, binding, solubility, lipophicity, and bioactivity. In a single-task setting, \method ranks first among eight baseline agents, including autoresearch systems, ML automation agents, and scientific discovery agents. With cross-task memory accumulated from 16 smaller tasks, \method achieves an average score of 0.935 on 17 held-out tasks in a cross-task setting and outperforms the best baseline by more than 10-30\% in a zero-test-time search regime.
In summary, this work makes three key contributions.
\begin{itemize}[leftmargin=*]

\item We introduce \method, an autonomous agent that maintains a cross-task memory integrated into a different stage of an MCTS-based search loop, with a formal guarantee that the memory-augmented selection policy preserves the regret bound of standard UCB.

\item Unlike previous works such as Autoresearch~\cite{karpathy_autoresearch_2026} that refine user-provided or top leaderboard models, \method automatically builds a skill library by searching the literature and GitHub repositories. This broadens the search space and removes the need for a human-curated starting point.

\item We show that cross-task memory enables a zero-test-time search regime: \method-ZERO transfers verified solutions to new tasks without test-time search. It outperforms all baseline agents by more than 10-30\% even though the baselines perform 20 search iterations at test time.
\end{itemize}

\label{Introduction}
\section{Related Work}
\label{related_work}

\textbf{Automated algorithm discovery agents.}
Recent advances in LLMs have enabled agents to automate the model development in machine learning as a search problem over benchmark datasets.
Early attempts solve this problem with retrieval-augmented generation from existing open-source models \citep{guo2024ds,nam2025mle}. Nevertheless, later work tackles them via searching algorithms \citep{jiang2025aide,toledo2025ai}.
More recently, a variety of work improve over previous methods from several dimensions, including introducing agent structures \citep{yang2025r,li2025fm}, incorporating better search algorithms \citep{chen2026mars,feng2026internagent} and external knowledge \citep{nadafian2026kapso}.
\method introduces the accumulation of structured cross-task experience that grows as the agent solves successive tasks, enabling later tasks to directly retrieve verified solutions at zero search cost or to warm-start from empirically validated starting points.

\textbf{Agent experience.}
A growing body of work equips LLM agents with persistent, evolving memory, which differs primarily in what is stored and how actionable it is.
Episodic memory aids immediate retries with trial-level records~\citep{shinn2023reflexion}, while other approaches distill non-executable insights into semantic memory~\citep{zhao2024expel,chen2024automanual,suzgun2026dynamic}.
For actionable guidance, systems develop procedural memory by maintaining append-only skill libraries~\citep{wang2023voyager}, inducing reusable workflows~\citep{wang2024agent}, managing full memory lifecycles~\citep{fang2025memp}, or evolving shared knowledge via multi-agent reflection~\citep{qu2026coral}.
More recent work explores the accumulation of agent experiences across tasks such as \citep{zheng2025skillweaver, tang2025agent,xiao2025toolmem}. 
\method combines two complementary memory mechanisms, pairing an executable skill library that expands across tasks with a performance-grounded memory that deepens through execution, enabling broader search coverage and more targeted reuse over time.

\textbf{Scientific discovery agents.}
Recent work has begun to instantiate LLMs as scientific discovery agents with their growing capacity to reason over codes, tools, and scientific contexts.
One line of work embeds LLMs in a verifier-driven hypothesis search loop.
FunSearch~\citep{romera2024mathematical} uses LLMs as evolutionary operators in a program evolution loop, while a significant amount of later works extend its capability beyond program discovery \citep{wangefficient2025,shojaeellm2025,novikov2025alphaevolve}. 
Beyond hypothesis search, an alternative perspective is to build agents that orchestrate tools such as Coscientist \citep{boiko2023autonomous} and ChemCrow \citep{m2024augmenting}. Later work~\citep{huang2025biomni,jin2025stella} has furthered agent capabilities to build computational workflows. 
More recently, SAGA~\citep{du2025accelerating} has taken a step forward in automatically evolving the objectives in the scientific discovery workflow. 
The most closely related work is Agentomics~\citep{martinek2026agentomics}, which builds an end-to-end experimentation agent that explores ML modeling strategies for a given biomedical dataset.
In contrast, \method formulates this setting as a cross-task search problem, reusing empirically validated strategies from prior tasks to seed experience-conditioned MCTS and avoid redundant exploration.

\label{Related_Work}
\section{Methodology}
\label{sec:method}

\textbf{Problem formulation.}
We first define the terminology used in this paper.
\vspace{-3pt}
\begin{itemize}[leftmargin=*]
    \item 
    A \textbf{target task} is 
    \(\tau = (\mathcal{D}_\text{train}, \mathcal{D}_\text{val}, \mathcal{D}_\text{test}, \mu, B)\), 
    where \(\mathcal{D}_*\) denotes the training, validation, and test data; 
    \(\mu\) is the task metric, such as AUROC, AUPRC, RMSE, or MAE; 
    and \(B\) is the \emph{search budget}, i.e., the number of new candidate solutions the agent may train and validate during iterative search.

    \item The \textbf{skill library} \(\mathcal{K}\) is a collection of validated, task-relevant executable skills, including model families, dataset-specific methods, and training strategies such as featurization, preprocessing, class-imbalance loss, sampling, tuning.

    \item An \textbf{executable solution} is a runnable model built from the skills in $\mathcal{K}$, possibly with typed edits. It specifies the entire pipeline (e.g., model architecture, featurization, training procedure) and is included only if it can be executed in a sandbox. The \emph{best solution} is a solution that receives the best score on the \emph{validation} set according to a given task metric $\mu$.
    
    \item The \textbf{search tree} contains all executable solutions for a given task.
    Each \emph{node} in the search tree is a particular instance of a model family. Each \emph{edge} contains \textbf{typed edits} of its parent node, including changes in model architecture, training objective, data processing, and ensemble strategies.
\end{itemize}
\vspace{-3pt}

\textbf{Overview.}
The goal of \method is to find the best solution within the search budget \(B\).
As shown in \Cref{fig:overview}, \method first constructs an executable skill library \(\mathcal{K}\) (\S\ref{sec:skills}) by searching the literature.
It then maintains a experience memory \(\mathcal{Z}\) across tasks (\S\ref{sec:exp mem}).
When \(B>0\), \method runs the memory-enhanced Monte Carlo tree search algorithm (MCTS) to search for executable solutions, where each step is equipped with one of the memory components (\S\ref{sec:memory}).
When \(B=0\), \method performs memory routing to transfer a verified solution directly without additional search (\S\ref{sec:zero_route}).
We call this setting \textsc{DrugSAGE-Zero}.

\begin{figure}[t!]
    \centering
    \includegraphics[width=\linewidth]{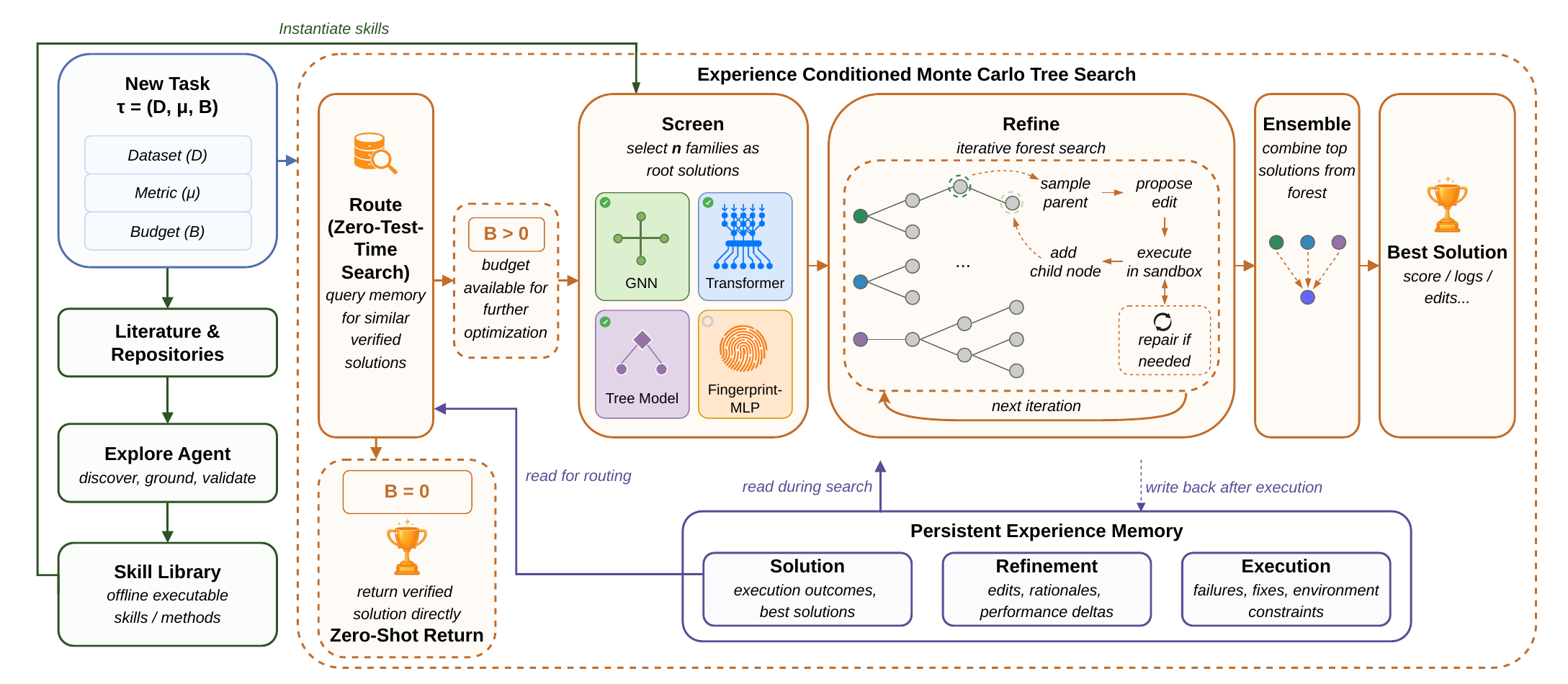}
    \vspace{-15pt}
    \caption{
        \textbf{\method overview.}
        The Explore Agent builds a shared skill library $\mathcal{K}$ from literature and repositories.
        Cross-task memory $\mathcal{Z}$ enables two settings: experience-conditioned MCTS ($B>0$), where $\mathcal{Z}$ guides and is updated by each search step, and zero-test-time routing ($B=0$), where a verified solution is retrieved from $\mathcal{Z}$ without any new search.
    }
    \label{fig:overview}
    \vspace{-14pt}
\end{figure}

\subsection{From Literature to Executable Skills}
\label{sec:skills}

A scientific agent needs an open search space, but not an unconstrained one.
\method treats the literature and GitHub repositories as a source of executable search space.
Before budgeted search begins, the Explore Agent adds new methods to the shared skill library~\(\mathcal{K}\). It follows a discovery, grounding, and validation procedure.
The discovery stage expands the task to multiple expert perspective queries, searches the literature, selects relevant papers with usable repositories, and writes a task memory of candidate methods.
The grounding stage clones selected repositories, extracts an abstract-syntax-tree (AST) API snapshot of public interfaces, model classes, and usage examples, and asks the LLM to write a grounded \texttt{SKILL.md} using only symbols observed in that snapshot.
The validation stage runs tiered checks, from syntax and import resolution to end-to-end execution in a per-skill sandbox, with automatic repair on failure.
Only validated skills are included in \(\mathcal{K}\).
In this way, the search space can grow with the field while remaining executable.
Details of the explore agent are provided in the Appendix~\ref{app:explore}.

\subsection{Experience Memory}
\label{sec:exp mem}
The agent maintains a persistent memory \(\mathcal{Z}=(\mathcal{R},\mathcal{H},\mathcal{Q})\) across tasks.
These memories provide reusable cross-task evidence for the search tree.

\textbf{The solution memory \(\mathcal{R}\)} records the performance of different models in different tasks.
Each record in this memory describes a node in the search tree, including its model family, model architecture, hyperparameters, training procedure, data processing, training objectives, task description, and its performance on the validation set. \(\mathcal{R}\) is updated whenever a new solution is added to the search tree. This memory helps \method identify the most promising solutions to explore.

\textbf{The refinement memory \(\mathcal{H}\)} stores the information associated with each edge in the search tree. Each record contains the description of the parent and child nodes, task description, LLM-proposed edits (e.g., changes of model architecture, training objective, data processing); rationale (LLM-generated reasons why proposed edits are beneficial); and the validation performance difference between the parent and child.
\(\mathcal{H}\) is updated when a child node spawns from a parent with LLM-generated edits. This memory helps \method identify promising optimization strategies, e.g., what changes in model architecture, training pipeline, and data processing generally work better than others.

\textbf{The execution memory \(\mathcal{Q}\) } stores all the logs generated by the sandbox during execution, including failure information, verified fixes, resource profiles, environment traces, and sandbox logs.
When the execution of a candidate solution fails, the sandbox matches the error against known failure signatures and applies a verified fix immediately if one exists. 
Resource profiles keep track of the average runtime and memory of this model family, which is useful for preventing out-of-memory or timeout failures.
After each execution, new failure information, fixes, and resource profiles are appended to \(\mathcal{Q}\). This memory enables \method to apply quick fixes when it encounters execution failures.

\subsection{Experience-Memory-Enhanced MCTS}
\label{sec:memory}

\begin{figure}[t]
    \centering
    \includegraphics[width=0.9\linewidth]{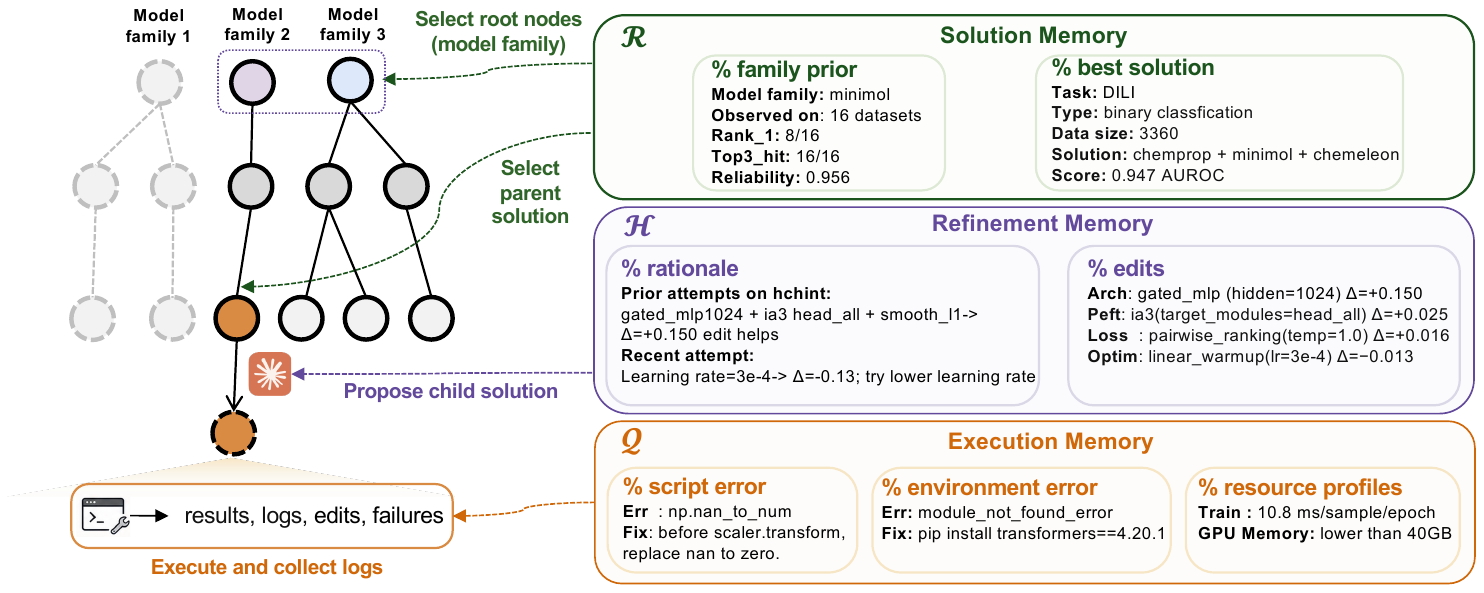}
    \caption{
    \textbf{Experience Memory.}
    \method stores cross-task experience as
    \(\mathcal{Z}=(\mathcal{R},\mathcal{H},\mathcal{Q})\).
    Solution memory \(\mathcal{R}\) stores family priors and verified best solutions for root-family and parent-solution selection.
    Refinement memory \(\mathcal{H}\) stores proposal rationales and typed edit effects for grounding child-solution proposals.
    Execution memory \(\mathcal{Q}\) stores script errors, environment failures, verified fixes, and resource profiles for sandbox repair and resource-aware execution.
    After each execution, outcomes, edits, rationales, logs, fixes, and resource usage are written back to memory, converting prior train/eval runs into reusable cross-task evidence.
    }
\label{fig:executable-experience-memory}
\vspace{-14pt}
\end{figure}

\method searches over a task-level solution forest.
Each root is a baseline executable solution obtained by skill from \(\mathcal{K}\) on the target task. Each edge is a typed edit or composition operation, and each node is a concrete executable solution with execution status, metric record, lineage, and artifacts. The search loop alternates between four stages:
screening roots, refining completed parents, proposing executable edits,
and executing or repairing the resulting candidates. Experience memory
\(\mathcal{Z}=(\mathcal{R},\mathcal{H},\mathcal{Q})\) enters
this loop through four distinct interfaces, illustrated in
\Cref{fig:executable-experience-memory}.

\textbf{Step 1: selecting model families using the solution memory \(\mathcal{R}\).}
The screening step allocates the budget to different model families based on their performance on historical tasks.
Specifically, \method selects the next eligible family $f$ using a memory-augmented UCB rule:
\begin{equation}
f_t
= \arg\max_{f \in \mathcal{F}_{\text{eligible}}}
\underbrace{\mathrm{exploit}_t(f)}_{\text{target task}}
+ \underbrace{\alpha\sqrt{\tfrac{\ln(t+1)}{n_t(f)}}}_{\text{exploration}}
+ \underbrace{\mathrm{transfer}(f)}_{\text{historical tasks}},
\label{eq:family-select}
\end{equation}
where \(t\) indexes the screening step; \(n_t(f)\) ($\geq1$) counts the number of visits of the model family \(f\); \(\mathrm{exploit}_t(f)\in[0,1]\) records the best performance of any node in the model family on the target task, penalized for instability and overfit;
$\mathrm{transfer}(f)$ computes the weighted average performance of nodes in family $f$ over historical tasks, using the records retrieved from the solution memory. We formally define $\mathrm{transfer}(f)$ in Appendix~\ref{app:transfer-normalization}.
Intuitively, when $t=0$, the agent tends to choose the model family with the best performance on historical tasks. 
As the agent explores more solutions on the target task, the agent relies more on performance on the current task ($\mathrm{exploit}_t(f)$) rather than the performance on the historical tasks ($\mathrm{transfer}(f)$).
Moreover, we formally show that introducing this bias term does not change the regret bound of standard UCB:

\begin{theorem}
Assuming bounded historical performance $|\mathrm{transfer}(f)|\le C_0$, and standard asymptotic convergence of the exploit estimate, the policy in \Cref{eq:family-select} attains cumulative regret
\begin{equation}
  R_T \;\le\;
  \sum_{\substack{f \in \mathcal{F}:  \Delta_f > 0}}
  \!\left(
    \frac{8(\alpha^2+C_0^2)\ln T}{\Delta_f}
    + \left(1+\tfrac{\pi^2}{3}\right)\Delta_f
  \right),
\end{equation}
matching the $O(|\mathcal{F}|\log T)$ order of standard UCB.

Proof: We use the boundedness and asymptotic consistency of $\mathrm{transfer}(f)$, follow the proof steps in UCB1 and AM-GM inequality to prove it. Details are in Appendix \ref{appendix:proof_thm}.
\end{theorem}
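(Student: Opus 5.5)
We adapt the classical UCB1 analysis of Auer et al., treating $\mathrm{transfer}(f)$ as a bounded, time-invariant bias added to each index. Let $f^\star$ be the optimal family with mean $\mu^\star$, let $\Delta_f=\mu^\star-\mu_f$, and write the index as $I_t(f)=\mathrm{exploit}_t(f)+c_t(f)+\mathrm{transfer}(f)$, where $c_t(f)=\alpha\sqrt{\ln(t+1)/n_t(f)}$. Since $R_T=\sum_{f:\Delta_f>0}\Delta_f\,\E[n_T(f)]$, it suffices to bound $\E[n_T(f)]$ for each suboptimal $f$ by $8(\alpha^2+C_0^2)\ln T/\Delta_f^2+1+\pi^2/3$.

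We fix a threshold $\ell_f=\lceil 8(\alpha^2+C_0^2)\ln T/\Delta_f^2\rceil$ and decompose $n_T(f)\le \ell_f+\sum_t \mathbf{1}\{f_t=f,\ n_t(f)\ge \ell_f\}$. A pull of $f$ at step $t$ requires $I_t(f)\ge I_t(f^\star)$. Following UCB1, this forces one of three events:
\begin{enumerate}[label=(\roman*)]
\item $\mathrm{exploit}_t(f^\star)\le \mu^\star-c_t(f^\star)$;
\item $\mathrm{exploit}_t(f)\ge \mu_f+c_t(f)$;
\item $\mu^\star<\mu_f+2c_t(f)+\mathrm{transfer}(f)-\mathrm{transfer}(f^\star)$.
\end{enumerate}
The transfer difference is at most $2C_0$, so event (iii) requires $\Delta_f<2c_t(f)+2C_0$. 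The main obstacle lies here. A constant bias of size $2C_0$ cannot be absorbed by a shrinking confidence width, because when $\Delta_f\le 2C_0$ it could cause linear regret. To close this gap, we invoke the stated hypotheses: "asymptotic consistency" of the exploit estimate, together with the boundedness of transfer, lets us treat the transfer contribution as an effective widening of the confidence radius. Concretely, we fold it into an enlarged radius $\tilde c_t(f)=\sqrt{2(\alpha^2+C_0^2)\ln(t+1)/n_t(f)}$, which satisfies $\tilde c_t(f)\ge c_t(f)+(\text{transfer bias share})$ once $n_t(f)$ is at the scale $\ell_f$. This is the step that needs the AM--GM/Cauchy--Schwarz inequality $(a+b)^2\le 2(a^2+b^2)$, giving $\alpha\sqrt{x}+C_0\sqrt{x}\le\sqrt{2(\alpha^2+C_0^2)x}$. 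With that inequality, $n_t(f)\ge\ell_f$ makes $2\tilde c_t(f)\le\Delta_f$, so event (iii) is impossible.

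To be honest about this step, I would state explicitly that it needs either the transfer gap to be scaled by $\sqrt{\ln t/n_t}$ or the consistency assumption to be read as making the transfer bias vanish at the confidence rate. My first attempt would be to formalize it as an assumption that $|\mathrm{transfer}(f)-\mathrm{transfer}(f^\star)|\le 2C_0\sqrt{\ln(t+1)/n_t(f)}$ in effect, since that is exactly what the claimed constant $\alpha^2+C_0^2$ reflects.

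For events (i) and (ii), we apply Hoeffding's inequality to the exploit estimates, assuming they concentrate as empirical means in $[0,1]$ with $\alpha\ge\sqrt2$ or an equivalent normalization. Each event has probability at most $t^{-4}$ for every fixed pair of counts. Union-bounding over $s,s_f\le t$ and summing over $t$ gives $\sum_t 2t^{-2}\le\pi^2/3$.

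Combining these pieces gives $\E[n_T(f)]\le \ell_f+\pi^2/3\le 8(\alpha^2+C_0^2)\ln T/\Delta_f^2+1+\pi^2/3$. Multiplying by $\Delta_f$ and summing over suboptimal families yields the stated bound, which is $O(|\mathcal{F}|\log T)$.
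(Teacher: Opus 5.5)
Your UCB1 skeleton matches the paper's: the threshold $\ell_f$, the three-event decomposition, Hoeffding with the $\sum_t 2t^{-2}\le\pi^2/3$ tail, and $(a+b)^2\le 2(a^2+b^2)$. But the argument breaks exactly where you flag it, at event (iii), and your patch does not repair it.

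With $\Delta_f=\mu^\star-\mu_f$ measured on the target task alone, $\mathrm{transfer}(f)-\mathrm{transfer}(f^\star)$ is a fixed offset of size up to $2C_0$. No radius decaying like $1/\sqrt{n_t(f)}$ can dominate it, so ``folding it into $\tilde c_t(f)$'' is not valid. As you correctly note, the claimed bound is then false. The extra assumption $|\mathrm{transfer}(f)-\mathrm{transfer}(f^\star)|\le 2C_0\sqrt{\ln(t+1)/n_t(f)}$ is not among the hypotheses. For a prior computed from historical tasks, it essentially assumes away the very bias the theorem is supposed to control. Note also that the consistency hypothesis the paper actually uses concerns $\mathrm{transfer}(f)$, not the exploit estimate.

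The missing idea is to change the benchmark rather than shrink the bias. In your notation, the paper sets $f^\star=\arg\max_f\,\mu_f+\Delta^\star(f)$ and $\Delta_f=\bigl(\mu^\star+\Delta^\star(f^\star)\bigr)-\bigl(\mu_f+\Delta^\star(f)\bigr)$, and measures $R_T$ against this transfer-adjusted value. Here $\Delta^\star(f)$ is a ground-truth transfer value, and consistency means $\mathrm{transfer}(f)=\Delta^\star(f)+\xi_t(f)$ with $|\xi_t(f)|\le C_0/\sqrt{n(f)}$. The paper reuses $C_0$ as the constant in this rate, and that rate, not $|\mathrm{transfer}(f)|\le C_0$ itself, is how $C_0$ enters the bound.

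The non-vanishing part of the offset is then part of the objective and never has to be beaten by the radius. Only $\xi_t(f)$ does, and it already decays at the $1/\sqrt{n}$ rate. Your event (iii) becomes $\Delta_f<2c_t(f)+\xi_t(f)\le\bigl(2\alpha\sqrt{\ln(t+1)}+C_0\bigr)/\sqrt{n_t(f)}$. Then $(a+b)^2\le 2(a^2+b^2)$ gives $n_t(f)<8(\alpha^2+C_0^2)\ln(t+1)/\Delta_f^2$ whenever $\ln(t+1)\ge 1$, so $n_t(f)\ge\ell_f$ rules it out. The $\xi_t(f^\star)$ term moves into event (i). The paper handles it with its asymptotic-optimism assumption $\mathrm{transfer}(f)\ge\Delta^\star(f)-\epsilon_t$ and its Step~1 lower bound on the optimal arm's index. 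With this redefinition, your decomposition and tail-sum argument go through essentially as written.
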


\textbf{Step 2: sampling parent solutions using the solution memory $\mathcal{R}$.}
Once the roots are selected, the decision shifts from family selection to choosing a specific solution in the model family to expand. Let $\mathcal{V}$ be a pool of completed, non-ensemble nodes with finite primary metrics. For each $v_i \in \mathcal{V}$, let $q_i$ denote its performance on the target task and $c_i=|\mathrm{children}(v_i)|$ its expansion count. The parent sampler assigns the following weight to each candidate solution:
\begin{equation}
w_i
= \underbrace{\sigma\!\left(\beta\,\frac{q_i - \mathrm{median}(q)}{\max(\mathrm{MAD}(q),\epsilon)}\right)}_{\text{target task performance}}
\cdot \underbrace{\frac{1}{1+c_i}}_{\text{breadth}}
\cdot \underbrace{\bigl(1+\lambda \cdot \mathrm{transfer}(v_i)\bigr)}_{\text{historical task performance}},
\quad
p(v_i)=\frac{w_i}{\sum_j w_j},
\label{eq:node-select}
\end{equation}
where $\sigma(x)=(1+e^{-x})^{-1}$, $\mathrm{MAD}(q)$ is the median absolute deviation across $\mathcal{V}$, $\epsilon>0$ guards against vanishing denominator, and $\mathrm{transfer}(v_i)$ is the node-level analog of $\mathrm{transfer}(f)$ (Appendix~\ref{app:transfer-normalization}).
The three factors balance target task performance, expansion breadth (discouraging overly-expanded parents), and historical performance on related tasks.

\textbf{Step 3: generating child solutions using the refinement memory $\mathcal{H}$.}
After a parent is sampled, \method asks the LLM to propose a typed edit. The prompt is grounded by $\mathcal{H}$, which retrieves relevant rationales, previous attempts, successful recipes, and recent trajectory summaries for the current task and parent lineage. 
The generated edit must parse into the expected schema and pass duplicate and feasibility checks. If the proposal is empty, malformed, or nearly a duplicate of a previous child, \method resamples a parent or retries proposal generation. After execution, the proposed edit, rationale, and the observed outcome are written back into $\mathcal{H}$.

\textbf{Step 4: construct environments to execute solutions using the execution memory \(\mathcal{Q}\).}
A proposed child is materialized as an executable program and evaluated in a sandbox.
Before execution, \method automatically prepares a skill-specific runtime: it resolves dependencies, creates or reuses an isolated environment, installs missing packages when needed, verifies imports with smoke tests, and sets timeout and resource limits using profiles from \(\mathcal{Q}\).
If setup or execution fails, the repair loop queries \(\mathcal{Q}\) for a verified fix matching the observed error; matched fixes are applied immediately and retried without additional LLM debugging.
Otherwise, \method escalates to dependency repair, script patching, program regeneration, or environment rebuild.
Successful fixes are marked verified; the final status, metrics, logs, resource trace, and repair outcome are written back to \(\mathcal{Q}\).

\textbf{Step 5: update memories.} Every executed candidate produces a result bundle that updates both the task-level forest and persistent memory. $\mathcal{R}$ receives the score, status, and the description of generated solutions. $\mathcal{H}$ receives the proposal rationale and trace of each LLM-guided child expansion. $\mathcal{Q}$ receives logs recording failures, repair, and resource profiles of all executed experiments. In this way, the same execution advances the current task and also becomes reusable evidence for later tasks.

\subsection{\textsc{DrugSAGE-zero}: Zero-Test-Time Search by Memory Routing}
\label{sec:zero_route}

Normally, \method explores candidate skills from the literature, optimizes them through iterative refinement, and returns the best solution found within a given experiment budget.
However, this search process often consumes substantial compute and token budgets.
With cross-task memory, the agent can draw on the memory accumulated from previous tasks to narrow or bypass the search entirely by reusing strategies that have proven effective in similar tasks. When $B=0$, \textsc{DrugSAGE-zero} transfers a verified solution directly without launching any additional search.

\textbf{Cross-task memory routing}. This is the key to \textsc{DrugSAGE-zero}'s zero-test-time search capability.
For a target task \(\tau=(\mathcal{D},\mu,B)\) with dataset description
\(d_\tau\), the agent forms a task signature
\begin{equation}
\phi(\tau)=\bigl(\mathrm{type}(\tau),\;\mu,\;
\log|\mathcal{D}|,\;\mathbf{e}(d_\tau)\bigr),
\label{eq:task-signature}
\end{equation}
where \(\mathbf{e}(d_\tau)\) is an embedding of the task description. Using this signature, the agent identifies \textbf{analog tasks} with similar task type and evaluation metrics as the target metric. 
\textsc{DrugSAGE-zero} ranks these analog tasks based on the proximity in training set size \(\log|\mathcal{D}_\text{train}|\) and the cosine
similarity of task description embeddings \(\mathbf{e}(d_\tau)\). Lastly, the agent retrieves all the solutions developed for the closest analog task and rank them based on their performance. \textsc{DrugSAGE-zero} returns the best solution in the analog task and deploys it in the target task.

\label{Methodology}
\section{Experiments}
\label{sec:experiments}

Our experiments evaluate \method in two settings: 
(1) we run \method in a single-task setting, where agents need to develop a SOTA solution from scratch, without relying on cross-task experience. This setting aims to evaluate \method's basic problem-solving and coding ability and properly compare our method with previous AI agent frameworks that do not have cross-task memory;
(2) we run \method in a cross-task setting, allowing the agent to transfer useful experience from previous tasks to reduce the search budget required for a new task or directly transfers a solution without test-time search.

\subsection{Evaluation of \method in the single-task setting}

\textbf{Benchmark tasks.} To evaluate whether \method can build SOTA solutions from scratch, we collected 22 molecular property prediction tasks from Therapeutic Data Commons (TDC)~\citep{huang2021therapeutics}. We chose TDC because it has a public leader board of the best solutions curated by human developers, which is important for us to verify if any discovered solution is SOTA. The 22 tasks span a variety of properties critical for drug discovery, including Absorption (6), Distribution (3), Metabolism (6), Excretion (3), and Toxicity (4), with training set sizes ranging from 475 to 13{,}130.

\textbf{Baselines.} 
Our baselines span two distinct paradigms, and we construct a fair comparison for each. 
The first paradigm consists of \emph{optimization-only agents} that refine existing algorithms rather than building a pipeline from scratch: Autoresearch~\citep{karpathy_autoresearch_2026} and ShinkaEvolve~\citep{lange2025shinkaevolve}, both run by anchoring on the top-three open-sourced TDC leaderboard models.
For fair comparison, we include \method-anchor, which searches from the same starting points as optimization-only agents.
The second paradigm consists of agents that develop a full solution pipeline from scratch: two \emph{ML-automation agents}, MLEvolve~\citep{feng2026internagent} and AIRA-Dojo~\citep{toledo2025ai}; one \emph{general-purpose coding agent}, Claude Code~\citep{anthropic_claude_code}; three \emph{scientific-discovery agents}, Biomni~\citep{huang2025biomni}\footnote{Biomni results are provided by the Biomni team from their beta platform and are not independently reproduced by us.}, STELLA~\citep{jin2025stella}, and Agentomics~\citep{martinek2026agentomics}.
All from-scratch agents, including \method, receive the same shared task prompt (\Cref{app:baselines}).
We also include TDC-Leaderboard as a human-curated reference.
All agents are powered by Claude-sonnet-4.6, except that Biomni and STELLA use their own default API configurations because their implementation does not support Claude.

\textbf{Protocol.} 
We evaluated \method on all 22 TDC datasets independently, so every task searches for the SOTA solution from scratch. 
We use the official 5-seed train-validation-test split and collect each task's native metric, so our scores are directly comparable to public TDC leaderboard entries.
All agents selects best solutions by validation performance and we report the average test metrics.
We use min-max normalization to compute a normalized score: $\text{score}_{m,d} = (x_{m,d} - \min_d)/(\max_d - \min_d)$ for metrics that are higher the better and $\text{score}_{m,d} = (\max_d - x_{m,d})/(\max_d - \min_d)$ for metrics that are lower the better. We report the average normalized score for all 22 tasks in \Cref{tab:main}.

\textbf{Results.}
\Cref{tab:main} evaluates from-scratch search without cross-task memory.
Full \method achieves the best average rank (\(1.95\)) and the most wins (\(10/22\)).
To control for the search space, \method-anchor disables the Explore Agent and uses the same top-3 TDC leaderboard models as Autoresearch and ShinkaEvolve.
Under this matched setting, \method-anchor still reaches an average rank of \(2.59\) and \(7/22\) wins, outperforming Autoresearch (\(4.82\), \(2/22\)) and ShinkaEvolve (\(4.95\), \(0/22\)).
The gap between full \method and \method-anchor measures the benefit of automatically building the executable skill library.

\begin{table}[t]
\centering
\caption{Results on the first scenario, where each method searches for SOTA solutions from scratch. Expl.\ and Mem.\ indicate whether the system supports exploration and memory through its standard interface. Avg Rank per Category is averaged over each subset of datasets. Norm.\ Score is the average min-max-normalized score over 22 datasets. \#Wins/Tot.\ counts datasets ranked first. Per-task absolute metrics (mean $\pm$ std over 5 seeds) are reported in \Cref{app:admet-table}.}
\vspace{5pt}
\label{tab:main}
\setlength{\tabcolsep}{3.5pt}
\renewcommand{\arraystretch}{1.05}
\begin{adjustbox}{max width=\textwidth}
\begin{tabular}{lcc|ccccc|ccc}
\toprule
 & \multicolumn{2}{c|}{Capability} & \multicolumn{5}{c|}{Avg Rank per Category $\downarrow$} & \multicolumn{3}{c}{Overall} \\
\cmidrule(lr){2-3} \cmidrule(lr){4-8} \cmidrule(lr){9-11}
Method & Expl. & Mem. & Abs (6) & Dist (3) & Meta (6) & Excr (3) & Tox (4) & Avg Rank $\downarrow$ & Norm.\ Score $\uparrow$ & \#Wins/Tot.\ $\uparrow$ \\
\midrule
\rowcolor{categoryheader} \multicolumn{11}{l}{\textit{TDC reference}} \\
TDC Leaderboard Official  & --     & --     & 2.83 & 5.33 & 3.17 & \textbf{2.33} & 3.00 & 3.23 & 0.878 & 2/22 \\
\midrule
\rowcolor{categoryheader} \multicolumn{11}{l}{\textit{Optimization-only agents (top-3 leaderboard model anchored)}} \\
Autoresearch              & \xmark & \cmark & 5.67 & 4.67 & 4.17 & 4.33 & 5.00 & 4.82 & 0.741 & 2/22 \\
ShinkaEvolve              & \xmark & \cmark & 5.33 & 4.00 & 5.33 & 5.33 & 4.25 & 4.95 & 0.731 & 0/22 \\
\rowcolor{ourshighlight} DRUGSAGE-anchor mode & \xmark & \cmark & 2.17 & 3.00 & \textbf{2.67} & 3.00 & 2.50 & 2.59 & 0.875 & 7/22 \\
\midrule
\rowcolor{categoryheader} \multicolumn{11}{l}{\textit{ML automation agents}} \\
MLEvolve                  & \cmark & \cmark & 7.50 & 7.67 & 6.83 & 8.67 & 8.25 & 7.64 & 0.567 & 0/22 \\
AIRA-dojo                 & \cmark & \cmark & 7.83 & 8.00 & 8.00 & 7.00 & 6.75 & 7.59 & 0.615 & 0/22 \\
\midrule
\rowcolor{categoryheader} \multicolumn{11}{l}{\textit{General coding agents}} \\
Claude Code               & \cmark & \cmark & 7.17 & 5.67 & 6.67 & 5.00 & 7.75 & 6.64 & 0.646 & 0/22 \\
\midrule
\rowcolor{categoryheader} \multicolumn{11}{l}{\textit{Scientific discovery agents}} \\
Biomni                    & \cmark & \xmark & 10.50 & 10.67 & 9.17 & 11.00 & 10.75 & 10.27 & 0.076 & 0/22 \\
STELLA                    & \cmark & \cmark & 7.33 & 8.00 & 8.67 & 7.67 & 7.00 & 7.77 & 0.550 & 1/22 \\
Agentomics                & \cmark & \cmark & 8.17 & 8.00 & 8.67 & 8.67 & 8.75 & 8.45 & 0.544 & 0/22 \\
\midrule
\rowcolor{categoryheader} \multicolumn{11}{l}{\textit{Ours}} \\
\rowcolor{ourshighlight} \textbf{\method} & \cmark & \cmark & \textbf{1.33} & \textbf{1.00} & \textbf{2.67} & 3.00 & \textbf{1.75} & \textbf{1.95} & \textbf{0.929} & \textbf{10/22} \\
\bottomrule
\end{tabular}
\end{adjustbox}
\vspace{-4pt}
\end{table}

\subsection{Evaluation of \textsc{DrugSAGE} in the cross-task setting}
\label{sec:exp-amortize}

\textbf{Experience pool and held-out benchmark tasks.} In this scenario, we want to test whether the memory $\mathcal{Z}$ can transfer experience to a new target task and eliminate the need for test-time search. For this purpose, we partition the 22 TDC datasets by training-set size: the 16 smallest tasks ($< 5{,}000$ samples) form the \emph{experience pool}, from which \method builds the cross-task memory $\mathcal{Z}$ and the skill library $\mathcal{K}$; the rest of the six largest tasks serve as held-out tasks. This size-based split allows \textsc{DrugSAGE-Zero} to gain experience on small tasks before the agent encounters the larger targets, where each trial is more expensive. 
In addition, we collect 11 tasks from the Polaris Hub~\citep{ash2025practically}, including six ADMET tasks from \citet{fang2023prospective} and five kinase-inhibition tasks based on the PKIS2 data of \citet{drewry2017progress}. These 11 Polaris tasks form a separate held-out benchmark set used only at evaluation time and never entering $\mathcal{Z}$. The two held-out sets together give 17 tasks on which we measure cross-task memory transfer.

\textbf{Setup.}
According to the zero-test-time routing regime defined in \S\ref{sec:zero_route}, \textsc{DrugSAGE-zero} transfers a verified solution from memory with \(B=0\), without test-time search on the held-out tasks.
Baseline agents also have their own memory mechanisms, but they do not maintain the cross-task memory studied here.
We therefore report their best-so-far performance over \(B\in\{1,\ldots,20\}\) search steps under each agent's native budget definition.
All methods are evaluated on the same held-out targets using the benchmark metrics.

\begin{figure}[t]
\centering
\includegraphics[width=\textwidth]{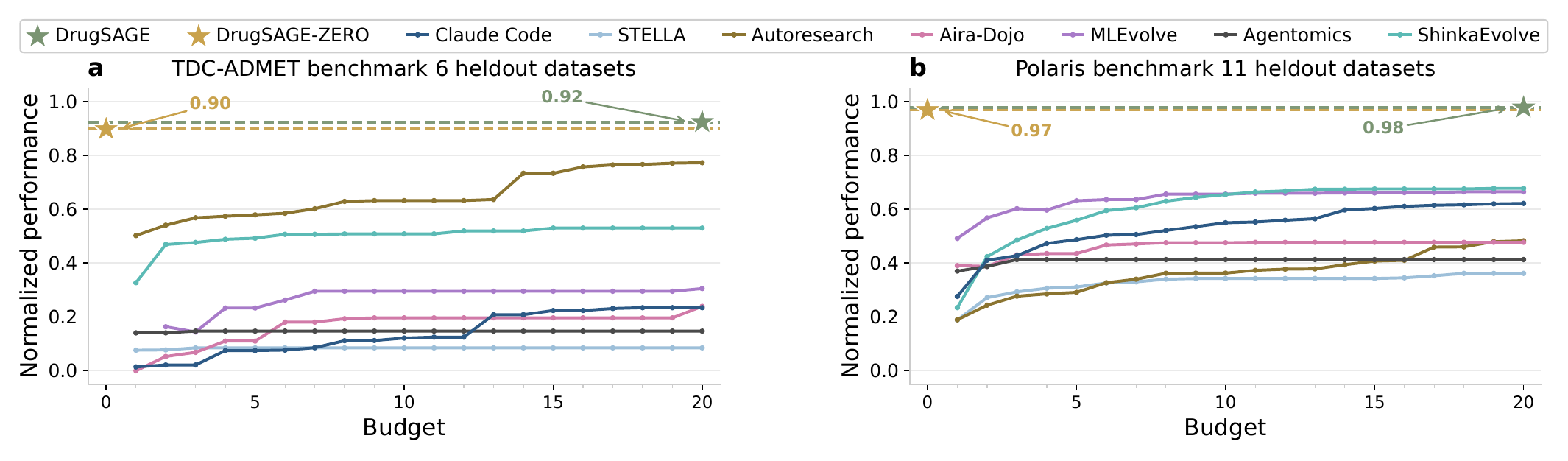}
\vspace{-20pt}
\caption{
\textbf{Cross-task efficiency on held-out tasks.}
\textbf{(a)} Six TDC-ADMET held-out tasks.
\textbf{(b)} Eleven Polaris held-out tasks.
Scores are min-max normalized per task before averaging; higher is better.
The yellow dashed line is \textsc{DrugSAGE-zero} with \(B=0\), and the green curve is \method with the same memory and up to \(20\) target-task search steps.
TDC-ADMET and Polaris absolute metrics and per-task curves are in \Cref{app:polaris-table} and \Cref{app:per-task-trajectories}.
}
\label{fig:cross-task-efficiency}
\end{figure}

\textbf{Results.}
\Cref{fig:cross-task-efficiency} compares \textsc{DrugSAGE-Zero} against baseline agents with increasing search budgets.
On the six held-out TDC-ADMET tasks, \textsc{DrugSAGE-zero} reaches an average normalized performance of \(0.90\), exceeding the strongest baseline at \(B=20\).
On the 11 held-out Polaris tasks, it reaches \(0.97\), again outperforming all baselines using their full search budget.
In both benchmarks, \textsc{DrugSAGE-zero} outperforms the baselines with more than 10-30\% gain. %

The Polaris results test whether zero-test-time routing works beyond the TDC benchmark, which is partly used to build memory. \Cref{tab:routing} shows the performance (without normalization) for each TDC held-out tasks. We find that \textsc{DrugSAGE-zero} achieves competitive performance than top TDC leaderboard models and \method with $B=20$ test-time search.
In some cases like Solubility and CYP2D6 Inhibition, \textsc{DrugSAGE-zero} even outperforms \method ($0.686$ vs.\ $0.722$ MAE; $0.779$ vs.\ $0.724$ AUPRC). LD50 is the only target where \textsc{DrugSAGE-zero} underperforms by a non-trivial margin ($0.547$ vs.\ $0.502$ MAE), but it stays close to the best leaderboard model.
In summary, these results show that experience accumulated on the historical tasks transfers to held-out tasks and can provide a high-quality solution for related problems without test-time search. Code of solution and analysis are shown in \Cref{app:zeroshot-case-study}.

\begin{table}[t]
\centering
\vspace{-14pt}
\caption{The cross-task performance (without normalization) of \textsc{DrugSAGE-zero} on six TDC held-out tasks. Full cross-task results for all individual tasks are in \Cref{tab:admet-best} and \Cref{tab:polaris-best}.}
\vspace{5pt}
\small
\label{tab:routing}
\setlength{\tabcolsep}{3.2pt}
\renewcommand{\arraystretch}{1.05}
\begin{adjustbox}{max width=0.9\textwidth}
\begin{tabular}{llccccc}
\toprule
Task (size) & Metric & \textsc{DrugSAGE-zero} & \method & TDC Leaderboard Best \\
\midrule
Ames (7{,}255)               
& AUROC $\uparrow$ 
& $0.871 \pm 0.003$ 
& $0.878 \pm0.013$ 
& $0.871 \pm 0.002$\\

LD50 (7{,}385)               
& MAE $\downarrow$ 
& $0.547 \pm 0.010$ 
& $0.502 \pm 0.013$ 
& $0.552 \pm 0.009$ \\

Solubility (9{,}982)         
& MAE $\downarrow$ 
& $\mathbf{0.686 \pm 0.006}$ 
& $0.722 \pm 0.011$ 
& $0.741 \pm0.013$ \\

CYP2C9 Inhibition (12{,}092) 
& AUPRC $\uparrow$ 
& $0.852 \pm 0.003$ 
& $0.861 \pm 0.002$ 
& $0.859 \pm 0.001$ \\

CYP3A4 Inhibition (12{,}328) 
& AUPRC $\uparrow$ 
& $0.914 \pm 0.002$ 
& $0.914 \pm 0.001$ 
& $0.916 \pm 0.000$ \\

CYP2D6 Inhibition (13{,}130) 
& AUPRC $\uparrow$ 
& $\mathbf{0.779 \pm 0.007}$ 
& $0.724 \pm 0.002$ 
& $0.790 \pm 0.001$ \\
\bottomrule
\end{tabular}
\end{adjustbox}
\vspace{-11pt}
\end{table}

\subsection{Ablation Study}
\label{sec:ablation}

\textbf{Benefit of the explore agent}. In \Cref{tab:main}, we compare the performance of \method against \method-anchor where explore agent is disabled and replaced with top TDC leaderboard models. We find that \method outperforms \method-anchor (normalized score 0.929 vs 0.875), proving the benefit of skill library automatically constructed by the explore agent.

\textbf{LLM cost}.
\Cref{fig:fig4}(a) compares the total LLM API cost on the six held-out TDC-ADMET tasks \(B=20\).
Compared with general coding and optimization baselines, \method variants use substantially lower LLM cost.
In particular, \textsc{DrugSAGE-zero} performs no LLM generation at test time and only calls the lightweight \texttt{text-embedding-3-small} model for memory routing, resulting in near-zero task-level LLM API cost.

\textbf{Importance of cross-task memory}.
\Cref{fig:fig4} (a,\,b) evaluates the contribution of cross-task experience memory \(\mathcal{Z}\) under the same experimental setting.
From without \(\mathcal{Z}\), to with \(\mathcal{Q}\), to full memory, the normalized score increases monotonically, while the cost decreases monotonically, showing that the gain is not only from the base search procedure but also from reusing experience across tasks.

\textbf{Impact of analog tasks}. To examine the impact of analog tasks on the performance of \textsc{DrugSAGE-zero}, we conduct ablation studies on three target tasks (CYP2C9, CYP3A4, CYP2D6) that have tasks in the experience pool belonging to the same category.
We run \textsc{DrugSAGE-zero} with these tasks removed from the experience pool.
This forces the agent to select a less related source task from a different task category, often with a different metric.
As shown in \Cref{fig:fig4}(c), \textsc{DrugSAGE-zero} remains competitive with the TDC leaderboard best on all three targets: \(0.822\) versus \(0.820\) on CYP2C9 inhibition, \(0.892\) versus \(0.898\) on CYP3A4 inhibition, and \(0.733\) versus \(0.728\) on CYP2D6 inhibition.
Removing same-category tasks reduces performance only by \(0.03\)--\(0.05\), but the routed solutions remain close to the leaderboard level when \(B=0\). These results show that the performance of  \textsc{DrugSAGE-zero} is not merely copying exact same-category analogs on these CYP targets.
\begin{figure}[t]
  \centering
  \includegraphics[width=0.84\linewidth]{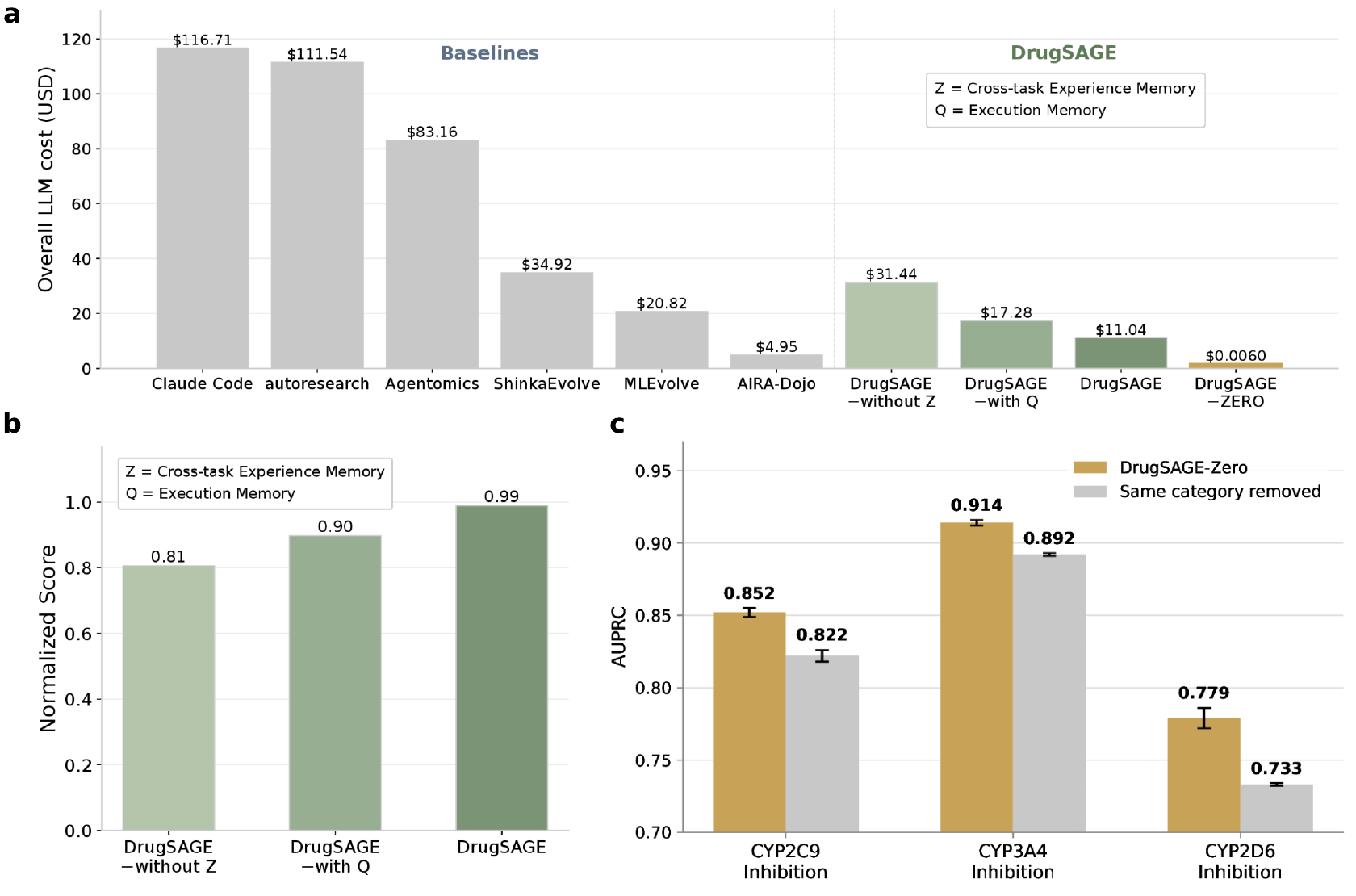}
  \vspace{-2pt}
  \caption{
  \textbf{(a)} Total LLM API cost on the six held-out TDC-ADMET tasks.
  Compared with baselines, \method variants require substantially lower LLM cost.
  \textsc{DrugSAGE-zero} uses nearly zero LLM cost by reusing cross-task experience without test-time search. \textbf{(b)} Performance improves from \method without \(\mathcal{Z}\), to \method with \(\mathcal{Q}\), and further to full \method, showing each module's contribution.
  \textbf{(c)} Ablation study of analog-task impact on the three target tasks.
  }
  \label{fig:fig4}
  \vspace{-17pt}
\end{figure}

\label{Experiment}
\section{Discussion}
\label{sec:discussion}
In this paper, we introduce an agentic framework \method for efficient drug discovery powered by agent-driven exploration, cross-task memory, and automatic experiment refinement. We demonstrate that \method can leverage the knowledge learned from different tasks to improve algorithm performances compared with other baselines for both in-distribution and out-of-distribution problems.
There are many interesting directions to further improve our framework. For example, our agentic framework prioritizes skill sets based on prior knowledge such as citation number, GitHub number, etc. Investigating the possibility of selecting important skills based on the combination of prior knowledge and experiments on datasets could be helpful. 
Finally, we test our framework mainly based on drug discovery tasks, but it could be also generalized to other scientific research areas. We plan to work on these directions in the future.

\label{Conclustion}
\section{Acknowledgments}
\label{sec:acknowledgments}
We gratefully acknowledge support from the Google Research Scholar Award, the NVIDIA Academic Grant Program, the Google TPU Research Cloud Award, and NSF ACCESS.

\label{Acknowledgments}
\bibliographystyle{unsrtnat}
\bibliography{references}
\appendix
\newpage
\appendix

\vbox{%
    \hsize\textwidth
    \linewidth\hsize
    \vskip 0.1in
    \hrule height 4pt
  \vskip 0.25in
  \vskip -\parskip%
    \centering
    {\LARGE\bf {Appendix for DrugSAGE} \par}
     \vskip 0.29in
  \vskip -\parskip
  \hrule height 1pt
  \vskip 0.09in%
  }

\startcontents[sections]
\printcontents[sections]{l}{1}{\setcounter{tocdepth}{3}}

\newpage
\section{Benchmark Dataset Summary}
\label{app:datasets}

Table~\ref{tab:datasets} provides a complete description of all benchmark datasets used in our evaluation, covering 22 ADMET datasets from the Therapeutics Data Commons (TDC)~\citep{huang2021therapeutics} and 11 held-out test datasets from the Polaris Hub~\citep{ash2025practically}.

\setlength{\LTcapwidth}{\linewidth}
\begin{longtable}{p{3.9cm} p{\dimexpr\linewidth-3.9cm-1.3cm-1.9cm-8\tabcolsep\relax} >{\raggedleft\arraybackslash}p{1.3cm} p{1.9cm}}
\caption{All benchmark datasets. Size is the total number of items in the training and testing sets.}
\label{tab:datasets} \\
\toprule
Dataset & Description & Size & Metric \\
\midrule
\endfirsthead
\toprule
Dataset & Description & Size & Metric \\
\midrule
\endhead
\midrule
\multicolumn{4}{r}{\small\textit{Continued on next page\ldots}} \\
\endfoot
\bottomrule
\endlastfoot
tdcommons-caco2               & Predict intestinal permeability in Caco-2 cell assay.                          &    906 & MAE      \\
tdcommons-hia                  & Predict human intestinal absorption.                               &    578 & AUROC    \\
tdcommons-pgp          & Classify P-glycoprotein (Pgp) inhibition for absorption risk assessment.        & 1{,}212 & AUROC    \\
tdcommons-bioavailability       & Predict oral bioavailability.                                       &    640 & AUROC    \\
tdcommons-lipophilicity            & Predict lipophilicity (logP).                                      & 4{,}200 & MAE      \\
tdcommons-solubility       & Predict aqueous solubility.                                         & 9{,}982 & MAE      \\
tdcommons-bbb-martins              & Classify blood--brain barrier penetration.                         & 1{,}975 & AUROC    \\
tdcommons-ppbr                  & Predict human plasma protein binding rate.                         & 1{,}797 & MAE      \\
tdcommons-vdss            & Predict volume of distribution at steady state (VDss).             & 1{,}130 & Spearman \\
tdcommons-cyp2c9-substrate         & Classify CYP2C9 enzyme substrates for drug metabolism.                         &    666 & AUPRC    \\
tdcommons-cyp2d6-substrate         & Classify CYP2D6 enzyme substrates for drug--drug interaction risk.             &    664 & AUPRC    \\
tdcommons-cyp3a4-substrate         & Classify CYP3A4 enzyme substrates for drug metabolism.                         &    667 & AUROC    \\
tdcommons-cyp2c9-inhibition             & Classify CYP2C9 enzyme inhibitors for drug--drug interaction risk.             & 12{,}092 & AUPRC    \\
tdcommons-cyp2d6-inhibition             & Classify CYP2D6 enzyme inhibitors for drug--drug interaction risk.             & 13{,}130 & AUPRC    \\
tdcommons-cyp3a4-inhibition             & Classify CYP3A4 enzyme inhibitors for drug--drug interaction risk.             & 12{,}328 & AUPRC    \\
tdcommons-half-life          & Predict drug half-life duration.                                    &    667 & Spearman \\
tdcommons-clearance-hepatocyte  & Predict hepatocyte intrinsic drug clearance.                        & 1{,}020 & Spearman \\
tdcommons-clearance-microsome   & Predict microsomal intrinsic drug clearance.                        & 1{,}102 & Spearman \\
tdcommons-herg                     & Classify hERG potassium channel blockers to assess cardiotoxicity risk.        &    648 & AUROC    \\
tdcommons-dili                     & Classify drug-induced liver injury (DILI) risk.                    &    475 & AUROC    \\
tdcommons-ames                     & Classify mutagenicity via the Ames bacterial reverse mutation assay.           & 7{,}255 & AUROC    \\
tdcommons-ld50                 & Predict acute oral toxicity (LD50).                                & 7{,}385 & MAE      \\
polaris-adme-fang-hclint         & Predict human liver microsomal intrinsic clearance.                & 2{,}806 & Pearson \\
polaris-adme-fang-rclint         & Predict rat liver microsomal intrinsic clearance.                  & 2{,}779 & Pearson \\
polaris-adme-fang-perm           & Predict MDR1-MDCK efflux ratio (permeability).                    & 2{,}403 & Pearson \\
polaris-adme-fang-solu           & Predict compound solubility using standardised ADME protocols.     & 1{,}978 & Pearson \\
polaris-adme-fang-hppb           & Predict human plasma protein binding.                              &    160 & Pearson \\
polaris-adme-fang-rppb           & Predict rat plasma protein binding.                                &    135 & Pearson \\
polaris-pkis2-egfr-wt-reg       & Predict EGFR wild-type kinase inhibition (\% inhibition).         &    640 & MSE      \\
polaris-pkis2-ret-wt-cls        & Classify RET wild-type kinase inhibitors for cancer target engagement.         &    640 & AUPRC    \\
polaris-pkis2-ret-wt-reg        & Predict RET wild-type kinase inhibition (\% inhibition).          &    640 & MSE      \\
polaris-pkis2-kit-wt-cls        & Classify KIT wild-type kinase inhibitors for cancer target engagement.         &    640 & AUPRC    \\
polaris-pkis2-kit-wt-reg        & Predict KIT wild-type kinase inhibition (\% inhibition).          &    640 & MSE \\
\end{longtable}

\section{Dataset Partition}
\label{app:partition}

Table~\ref{tab:partition} lists the experience pool and target set used in the cross-task amortization experiments (Section~\ref{sec:exp-amortize}). The split is determined by training-set size: the 6 largest datasets ($\geq 5{,}000$ training samples; the next-smallest dataset is Lipophilicity at $4{,}200$) constitute the target set; the remaining 16 form the experience pool.

\setlength{\LTcapwidth}{\linewidth}
\begin{longtable}{
p{2.5cm}
p{\dimexpr\linewidth-2.5cm-1.6cm-2.1cm-1.9cm-10\tabcolsep\relax}
>{\raggedleft\arraybackslash}p{1.6cm}
p{2.1cm}
p{1.9cm}
}
\caption{Partition of the 33 benchmark datasets into experience pool (16) and target set (17). The target set contains the 6 largest TDC ADMET datasets and 11 Polaris datasets.}
\label{tab:partition} \\
\toprule
Role & Dataset & Size & Task type & Metric \\
\midrule
\endfirsthead
\toprule
Role & Dataset & Size & Task type & Metric \\
\midrule
\endhead
\midrule
\multicolumn{5}{r}{\small\textit{Continued on next page\ldots}} \\
\endfoot
\bottomrule
\endlastfoot
\multirow{16}{=}{Experience pool}
& DILI                    & 475    & Binary cls. & AUROC    \\
& HIA                     & 578    & Binary cls. & AUROC    \\
& Bioavailability         & 640    & Binary cls. & AUROC    \\
& hERG                    & 648    & Binary cls. & AUROC    \\
& CYP2D6 Substrate        & 664    & Binary cls. & AUPRC    \\
& CYP2C9 Substrate        & 666    & Binary cls. & AUPRC    \\
& CYP3A4 Substrate        & 667    & Binary cls. & AUROC    \\
& Half Life               & 667    & Regression  & Spearman \\
& Caco-2                  & 906    & Regression  & MAE      \\
& Clearance (hepatocyte)  & 1{,}020 & Regression  & Spearman \\
& Clearance (microsome)   & 1{,}102 & Regression  & Spearman \\
& VDss                    & 1{,}130 & Regression  & Spearman \\
& Pgp                     & 1{,}212 & Binary cls. & AUROC    \\
& PPBR                    & 1{,}797 & Regression  & MAE      \\
& BBB                     & 1{,}975 & Binary cls. & AUROC    \\
& Lipophilicity           & 4{,}200 & Regression  & MAE      \\
\midrule
\multirow{17}{=}{Target set}
& Ames                    & 7{,}255  & Binary cls. & AUROC  \\
& LD50                    & 7{,}385  & Regression  & MAE    \\
& Solubility (AqSolDB)    & 9{,}982  & Regression  & MAE    \\
& CYP2C9 Inhibition       & 12{,}092 & Binary cls. & AUPRC  \\
& CYP3A4 Inhibition       & 12{,}328 & Binary cls. & AUPRC  \\
& CYP2D6 Inhibition       & 13{,}130 & Binary cls. & AUPRC  \\
& Polaris HClint          & 2{,}806  & Regression  & Pearson \\
& Polaris RClint          & 2{,}779  & Regression  & Pearson \\
& Polaris Perm            & 2{,}403  & Regression  & Pearson \\
& Polaris Solu            & 1{,}978  & Regression  & Pearson \\
& Polaris HPPB            & 160      & Regression  & Pearson \\
& Polaris RPPB            & 135      & Regression  & Pearson \\
& Polaris EGFR WT Reg     & 640      & Regression  & MSE     \\
& Polaris RET WT Cls      & 640      & Binary cls. & AUPRC   \\
& Polaris RET WT Reg      & 640      & Regression  & MSE     \\
& Polaris KIT WT Cls      & 640      & Binary cls. & AUPRC   \\
& Polaris KIT WT Reg      & 640      & Regression  & MSE     \\
\end{longtable}

\section{Per-Baseline Protocols and Prompts}
\label{app:baselines}

This section documents the run protocol, prompt structure, and non-default hyperparameters for every baseline evaluated in Section~\ref{sec:experiments}. 
Except for Biomni, all models use the 5-seed split of train and validation sets for training and the fixed \texttt{prepare.py} evaluation pipeline.

\paragraph{Computational setup.}
All agents, including \method and all baselines, use Claude Sonnet~4.6 as the backbone LLM via the Anthropic API unless the baseline's original paper specifies a different model (see per-baseline notes below). Each experiment is run on a single NVIDIA L40S GPU, wall-clock budget per agent per task is capped at 24 hours.
\paragraph{Shared task prompt (from-scratch baselines).}
All baselines except Autoresearch and ShinkaEvolve receive the same per-dataset task description. This description specifies (i) the task name and task type (classification or regression); (ii) the primary metric and its optimization direction; (iii) the train/validation pool size and held-out test size; (iv) the \texttt{prepare.py} API contract (\texttt{load\_data}, \texttt{load\_seed\_split}, \texttt{evaluate}, \texttt{save\_predictions}, \texttt{SEEDS}); (v) the required output format \texttt{[result] METRIC = MEAN +/- STD}; and (vi) constraints forbidding modification of \texttt{prepare.py}, test-label leakage, or package installation outside the pre-built conda environment. For Polaris tasks the description is identical in structure, substituting \texttt{prepare\_polaris.py} and the Polaris-specific metric. The full template is available in the released codebase.

\subsection{Autoresearch}
\label{app:baseline-autoresearch}

Autoresearch~\citep{karpathy_autoresearch_2026} is an automatic ML pipeline optimization agent. For each TDC dataset it starts from the top-3 open-sourced leaderboard models, and for each Polaris dataset it starts from Chemelon~\citep{green2026deep}.

\textbf{Prompt structure.}
Each anchor model directory contains an auto-generated \texttt{CLAUDE.md} system prompt (one per dataset/model pair); placeholders below are filled per task. The pipeline automatically records outcomes in \texttt{results.tsv}.

\begin{promptbox}{Autoresearch / Claude Code system prompt (\texttt{CLAUDE.md}, abbreviated)}
You are an autonomous ML researcher optimizing a drug
property prediction model.

Task: Improve {METRIC} ({DIRECTION}) on the {DATASET}
benchmark using the {MODEL} model.
Train/val: {N_TRAIN_VAL} | Test: {N_TEST}
Conda env: {ENV} | Experiment budget: 20

Constraints:
- train.py is the ONLY file you modify.
- prepare.py is FIXED -- DO NOT modify.
- 5-seed evaluation protocol (seeds [1,2,3,4,5]).
- No test-label leakage; no new package installation.

Experiment loop (repeat until budget exhausted):
1. Check results.tsv for experiment history.
2. Read train.py; plan one focused change.
3. Edit train.py.
4. Run: python pipeline.py run-exp {DATASET}/{MODEL}
     --desc "..." --gpu 0
5. Keep if val metric improved; auto-revert otherwise.

Strategy guide:
  Quick wins: tune LR, batch size, regularization.
  Medium effort: change model family, add CV.
  High effort: ensemble methods, custom featurization.

NEVER STOP -- continue autonomously until budget exhausted.
\end{promptbox}

\subsection{ShinkaEvolve}
\label{app:baseline-shinkaevolve}

ShinkaEvolve~\citep{lange2025shinkaevolve} is an evolutionary algorithm optimization agent. It maintains a population of candidate programs, samples parents from the archive, and mutates them via LLM-generated code blocks. The starting point of ShinkaEvolve is the same as Autoresearch.

\textbf{Prompt structure.}
The task-level system message is set via \texttt{task\_sys\_msg} in \texttt{shinka\_config.yaml}:

\begin{promptbox}{ShinkaEvolve task system message (\texttt{shinka\_config.yaml})}
You are an expert in ADMET property prediction and
machine learning for drug discovery.

TASK: {TASK_NAME}
This is a {TASK_TYPE} task. Metric: {METRIC} ({DIR}).
Your goal is to {maximize/minimize} {METRIC}.
Dataset: {N} train+val, {M} test molecules.

RULES:
1. Use prepare.py: load_data, load_seed_split, evaluate,
   save_predictions, save_summary, SEEDS
2. Iterate over ALL 5 seeds and train/evaluate each
3. Call save_predictions for each seed
4. Call save_summary() at the end
5. DATASET, METRIC, OUT variables are FIXED
6. Ensure all predictions are finite (no NaN/inf)

STRATEGY:
- Tune hyperparameters, feature engineering, architectures
- Consider ensemble methods, stacking, or blending
- Change ML library if beneficial
- Classification: probabilities; Regression: continuous
\end{promptbox}

Mutation prompts are constructed by a \texttt{PromptSampler}: for \emph{diff} patches (70\% probability), the prompt appends SEARCH/REPLACE format instructions requesting a unified diff; for \emph{full} patches (30\%), it requests a complete rewrite. Both include the parent program's code, its performance metrics, and the code and metrics of archive inspiration programs sorted in ascending-score order.

\subsection{MLEvolve}
\label{app:baseline-mlevolve}

MLEvolve~\citep{feng2026internagent} is an MLE agent with tree-search that explores a solution tree via draft--debug--improve cycles. It generates multiple initial drafts, executes them, and iteratively expands the tree by selecting promising nodes, proposing code improvements, and executing them.

\textbf{Prompt structure.}
Each dataset has a \texttt{description.md} providing the task prompt. Internally, MLEvolve has specialized sub-agents for drafting, debugging, improving, code review, data-leakage checking, result parsing, and multi-branch fusion, each with its own prompt template.

\begin{promptbox}{MLEvolve task description (\texttt{description.md}, abbreviated)}
# {TASK_NAME}

Predict the {TASK_NAME} from feature provided in {DATA}.
- Task type: {TASK_TYPE}
- Metric: {METRIC} ({DIRECTION})
- Train/val pool: {N} molecules | Test: {M} molecules

prepare.py API:
  load_data(name) -> (train_val_df, test_df, meta)
  load_seed_split(name, seed) -> (train_df, val_df)
  evaluate(y_true, y_pred, metric) -> float
  save_predictions(Path(out), seed, test, y_pred, metric)

Required output (per seed):
  [seed N] val_{metric}=VALUE
Final output:
  [result] METRIC = MEAN +/- STD

Constraints: do not modify prepare.py; no test-label
leakage; all 5 seeds mandatory; classification outputs
must be positive-class probabilities.
\end{promptbox}

\subsection{AIRA-Dojo}
\label{app:baseline-aira-dojo}

AIRA-Dojo~\citep{toledo2025ai} is an ML research agent that iterates through draft, improve, debug, analyze operators via MCTS, each backed by a prompted LLM call that generates a self-contained Python script.

\textbf{Prompt structure.}
Each operator has a Jinja2-templated system prompt defined in YAML; the \emph{draft} operator is shown below. The \emph{improve} prompt is similar but also injects the previous solution's code and execution output. The \emph{debug} prompt focuses on fixing a buggy script. A shared \texttt{instructions.txt} prepends the benchmark contract (use \texttt{prepare.py}, 5 seeds, output format) to all operator prompts. Draft complexity is varied across rounds (\texttt{simple}, \texttt{normal}, \texttt{complex}).

\begin{promptbox}{AIRA-Dojo draft operator (Jinja2 template, abbreviated)}
You are an expert machine learning researcher for
molecular property prediction. Carefully study the ADMET
task description, the fixed five-seed evaluation protocol,
the available data overview, and the available packages.
Propose exactly one promising initial approach and
implement it as a single self-contained Python script.

# TASK DESCRIPTION
{{task_desc}}

# DATA OVERVIEW
{{data_overview}}

# CONSTRAINTS
- Code must complete within {{execution_timeout}}.
- Use ./data/prepare.py; fixed seed splits 1..5.
- Print: [result] METRIC = MEAN +/- STD
- Do not perform exploratory data analysis.

# RESPONSE FORMAT
Provide "Idea to implement", then one Python code block
that imports prepare.py, trains on all 5 seeds, and
prints the final [result] line.
\end{promptbox}

\subsection{Claude Code}
\label{app:baseline-claudecode}

Claude Code~\citep{anthropic_claude_code} is Anthropic's agentic coding assistant, used as a general-purpose baseline. The \texttt{CLAUDE.md} system prompt is identical to Autoresearch (see the prompt box in \S\ref{app:baseline-autoresearch}), except the setting of anchor methods.

\subsection{STELLA}
\label{app:baseline-stella}

STELLA~\citep{jin2025stella} is a self-evolving LLM agent for biomedical research. We run STELLA in its default self-evolve mode with the shared task prompt.

\begin{promptbox}{STELLA prompt (abbreviated)}
You can only read files under the current path. The goal: **improve the benchmark metric** beyond the leaderboard baseline by having a better algorithm.
    
## Setup

1. **Create the task files for each task**:
  - `train.py` : the file you modify. Featurization, model, hyperparameters, training loop.
  - `prepare.py` (two levels up) : fixed data loading, evaluation, output. 
2. **Review experiment history**: Check `results.tsv` for what has already been tried.

## Constraints

**What you CAN modify:**
- `train.py` : everything is fair game: model hyperparameters, featurization, feature engineering, model architecture, ensemble strategies, preprocessing, training procedure.

**What you CANNOT modify:**
- `prepare.py` : read-only. Contains fixed evaluation (`evaluate()`), data loading (`load_data()`), prediction saving, and summary generation.
- The 5-seed evaluation protocol : all models run seeds [1,2,3,4,5] and report mean and std.
- The test set : no data leakage. Train only on `train_val` data.

**What you CAN add:**
- Use any algorithms you prefer.
- Create a new conda environment.

**What you CANNOT do:**
- Modify the evaluation metric or data splits.
- Access test labels during training.

## Metrics

Each benchmark has one primary metric (specified in `train.py` as `METRIC`).

## The Experiment Loop

Default: **20 experiments** per task.

### LOOP (up to budget):

1. **Plan**: Look at experiment history in `results.tsv`, and make changes to `train.py`.
2. **Modify `train.py`**: Make your change. Keep it focused one idea per experiment.
3. **Snapshot**: The pipeline automatically saves a copy of `train.py`.
4. **Run**: Run experiments.
5. **Check results**: Read the output.
6. **Record**: Log results.
7. **Decide**: Take results or not.
8. **Repeat** until budget exhausted.
## Output Format
The `save_summary()` call writes `results/summary.json` with full details.

## NEVER STOP

You need to save the best results and standard deviation in folder `./{task}_result/`.
\end{promptbox}

\subsection{Agentomics}
\label{app:baseline-agentomics}

Agentomics~\citep{martinek2026agentomics} is a multi-step ML agent that follows a fixed step sequence per iteration: iteration planning $\to$ data exploration $\to$ data split $\to$ data representation $\to$ model architecture $\to$ model training $\to$ model inference $\to$ prediction exploration $\to$ validation evaluation.

\textbf{Prompt structure.}
The system prompt is assembled at runtime by \texttt{prompt\_builder.py}:

\begin{promptbox}{Agentomics system prompt (abbreviated)}
Your goal is to create a robust machine learning model
that will generalize to new unseen data.

Multi-step architecture (per iteration):
  - Iteration Planning
  - Data Exploration
  - Data Split
  - Data Representation
  - Model Architecture
  - Model Training
  - Model Inference
  - Prediction Exploration
  - Validation Evaluation

Resources: {AVAILABLE_RESOURCES}
Conda env: {CONDA_PATH}
Dataset: {DATASET_PATH}

{DATASET_DESCRIPTION}

ADMET benchmark protocol:
- No anchor or starting solution is provided.
- Use numeric_label as the target column.
- Model selection uses validation {METRIC}.
- Runtime handles repeated training/evaluation across
  5 seed splits.
\end{promptbox}

The per-iteration user prompt provides the instruction \emph{``Develop a machine learning model that generalizes well to new unseen data.''}, workspace rules (writable directory, read-only previous iterations), and references to archived iteration outputs.

\subsection{Per-Baseline Budget Mapping}
\label{app:baseline-budget}

Figure~\ref{fig:cross-task-efficiency} plots performance against experiment budget $B$, but different baselines count their iterations differently. Table~\ref{tab:budget-mapping} shows how we convert each baseline's native unit to $B$: one unit of $B$ corresponds to one complete train-and-evaluate cycle on the target dataset, so the comparison is approximately compute-equalized across systems.

\begin{table}[h]
\centering
\small
\caption{Mapping from each baseline's native iteration unit to budget $B$.}
\label{tab:budget-mapping}
\begin{tabular}{ll}
\toprule
System & One unit of $B$ corresponds to \\
\midrule
Autoresearch    & one revise-and-evaluate cycle \\
ShinkaEvolve    & one mutation round \\
MLEvolve        & one tree-search expansion + evaluation \\
AIRA-Dojo       & one search step \\
Claude Code     & one user-agent iteration \\
STELLA          & one self-evolve round \\
Agentomics      & one full ML iteration \\
\midrule
Ours            & one full training run on the target task \\
\bottomrule
\end{tabular}
\end{table}

\section{Per-Task Scores for the TDC-ADMET Benchmark}
\label{app:admet-table}

Table~\ref{tab:admet-full} reports per-task results for all 22 TDC-ADMET benchmark datasets.
Biomni scores are provided by the Biomni team from their beta platform with no standard deviation reported.
All other systems were run by us under the unified evaluation protocol described in Appendix~\ref{app:baselines}.

\begin{sidewaystable}[p]
\centering
\caption{Per-task scores on the TDC-ADMET benchmark, reported as mean\,$\pm$\,std over 5 seeds.
Top 2 results are highlighted with \textbf{bold text} and \underline{underlined text}, respectively.
Biomni scores are from the official beta release (single point, no std).
($\uparrow$) / ($\downarrow$) denotes a larger / smaller number is better.}
\label{tab:admet-full}
\scriptsize
\setlength{\tabcolsep}{3pt}
\renewcommand{\arraystretch}{1.15}
\begin{adjustbox}{max width=\textheight}
\begin{tabular}{l l c r r r r r r r r r r}
\toprule
Dataset & Metric
  & \multicolumn{1}{c}{Leaderboard}
  & \multicolumn{1}{c}{Autoresearch}
  & \multicolumn{1}{c}{ShinkaEvolve}
  & \multicolumn{1}{c}{Claude Code}
  & \multicolumn{1}{c}{MLEvolve}
  & \multicolumn{1}{c}{Aira-Dojo}
  & \multicolumn{1}{c}{Biomni}
  & \multicolumn{1}{c}{STELLA}
  & \multicolumn{1}{c}{Agentomics}
  & \multicolumn{1}{c}{\method (Ours)}
  & \multicolumn{1}{c}{\method (anchor)} \\
\midrule
\rowcolor{categoryheader}
\multicolumn{13}{c}{\textit{Absorption}} \\
Bioavailability & AUROC $\uparrow$ & $0.748 \pm 0.033$ & $0.7298 \pm 0.0195$ & $0.7287 \pm 0.0187$ & $0.7205 \pm 0.0084$ & $0.7178 \pm 0.0033$ & $0.7152 \pm 0.0206$ & $0.5920$ & $0.7316 \pm 0.0317$ & $0.7239 \pm 0.0056$ & $\underline{\mathit{0.7731 \pm 0.0216}}$ & $\mathbf{0.7739 \pm 0.0151}$ \\
Caco-2 & MAE $\downarrow$ & $\underline{\mathit{0.256 \pm 0.006}}$ & $0.2857 \pm 0.0086$ & $0.2700 \pm 0.0057$ & $0.2613 \pm 0.0025$ & $0.2965 \pm 0.0072$ & $0.2783 \pm 0.0050$ & $0.5070$ & $0.2743 \pm 0.0064$ & $0.2905 \pm 0.0114$ & $\mathbf{0.2466 \pm 0.0030}$ & $0.2619 \pm 0.0071$ \\
HIA & AUROC $\uparrow$ & $\underline{\mathit{0.993 \pm 0.005}}$ & $0.9781 \pm 0.0027$ & $0.9873 \pm 0.0026$ & $0.9630 \pm 0.0156$ & $0.9636 \pm 0.0247$ & $0.9861 \pm 0.0061$ & $0.9740$ & $0.9785 \pm 0.0035$ & $0.9725 \pm 0.0071$ & $0.9930 \pm 0.0008$ & $\mathbf{0.9936 \pm 0.0014}$ \\
Lipophilicity & MAE $\downarrow$ & $0.456 \pm 0.008$ & $0.4088 \pm 0.0053$ & $0.4177 \pm 0.0056$ & $0.5181 \pm 0.0031$ & $0.5006 \pm 0.0101$ & $0.5280 \pm 0.0102$ & $0.7910$ & $0.5466 \pm 0.0039$ & $0.5663 \pm 0.0066$ & $\mathbf{0.3753 \pm 0.0034}$ & $\underline{\mathit{0.4009 \pm 0.0050}}$ \\
Pgp & AUROC $\uparrow$ & $0.938 \pm 0.002$ & $0.9241 \pm 0.0075$ & $0.9345 \pm 0.0048$ & $0.9287 \pm 0.0013$ & $0.9304 \pm 0.0018$ & $0.8978 \pm 0.0082$ & $0.8940$ & $0.9031 \pm 0.0138$ & $0.9118 \pm 0.0058$ & $\mathbf{0.9519 \pm 0.0026}$ & $\underline{\mathit{0.9404 \pm 0.0027}}$ \\
Solubility & MAE $\downarrow$ & $\underline{\mathit{0.741 \pm 0.013}}$ & $0.7446 \pm 0.0107$ & $0.7886 \pm 0.0219$ & $0.7854 \pm 0.0033$ & $0.7465 \pm 0.0174$ & $0.7814 \pm 0.0125$ & $1.1450$ & $0.8421 \pm 0.0180$ & $0.7668 \pm 0.0131$ & $\mathbf{0.7215 \pm 0.0109}$ & $0.7435 \pm 0.0090$ \\
\midrule
\rowcolor{categoryheader}
\multicolumn{13}{c}{\textit{Distribution}} \\
BBB Martins & AUROC $\uparrow$ & $0.924 \pm 0.003$ & $\underline{\mathit{0.9256 \pm 0.0031}}$ & $0.9247 \pm 0.0035$ & $0.8988 \pm 0.0016$ & $0.9013 \pm 0.0098$ & $0.9021 \pm 0.0045$ & $0.8460$ & $0.9029 \pm 0.0138$ & $0.9108 \pm 0.0077$ & $\mathbf{0.9377 \pm 0.0015}$ & $0.9228 \pm 0.0038$ \\
PPBR & MAE $\downarrow$ & $7.44 \pm 0.02$ & $7.34 \pm 0.27$ & $7.23 \pm 0.15$ & $7.27 \pm 0.05$ & $7.71 \pm 0.25$ & $7.57 \pm 0.14$ & $9.87$ & $7.42 \pm 0.11$ & $7.58 \pm 0.12$ & $\mathbf{7.11 \pm 0.19}$ & $\underline{\mathit{7.19 \pm 0.12}}$ \\
VDss & Spearman $\uparrow$ & $0.713 \pm 0.007$ & $0.7015 \pm 0.0089$ & $0.7084 \pm 0.0062$ & $0.7211 \pm 0.0047$ & $0.7196 \pm 0.0104$ & $0.6636 \pm 0.0230$ & $0.3880$ & $0.3247 \pm 0.0331$ & $0.5177 \pm 0.0367$ & $\mathbf{0.7392 \pm 0.0029}$ & $\underline{\mathit{0.7260 \pm 0.0052}}$ \\
\midrule
\rowcolor{categoryheader}
\multicolumn{13}{c}{\textit{Metabolism}} \\
CYP2C9 Sub & AUPRC $\uparrow$ & $\underline{\mathit{0.474 \pm 0.025}}$ & $0.4635 \pm 0.0247$ & $0.4361 \pm 0.0059$ & $0.4292 \pm 0.0196$ & $0.3967 \pm 0.0413$ & $0.3899 \pm 0.0152$ & $0.3870$ & $0.3739 \pm 0.0181$ & $0.3836 \pm 0.0198$ & $\mathbf{0.5237 \pm 0.0202}$ & $0.4535 \pm 0.0468$ \\
CYP2C9 Inh & AUPRC $\uparrow$ & $0.859 \pm 0.001$ & $\underline{\mathit{0.8800 \pm 0.0009}}$ & $0.7999 \pm 0.0027$ & $0.7895 \pm 0.0018$ & $0.7918 \pm 0.0031$ & $0.7860 \pm 0.0067$ & $0.6320$ & $0.7043 \pm 0.0057$ & $0.7871 \pm 0.0066$ & $0.8605 \pm 0.0019$ & $\mathbf{0.8828 \pm 0.0011}$ \\
CYP2D6 Sub & AUPRC $\uparrow$ & $0.736 \pm 0.024$ & $0.7041 \pm 0.0130$ & $0.6823 \pm 0.0145$ & $0.7407 \pm 0.0093$ & $0.6934 \pm 0.0064$ & $0.6708 \pm 0.0254$ & $0.5740$ & $0.6895 \pm 0.0068$ & $0.6895 \pm 0.0267$ & $\underline{\mathit{0.7542 \pm 0.0152}}$ & $\mathbf{0.8237 \pm 0.0154}$ \\
CYP2D6 Inh & AUPRC $\uparrow$ & $\underline{\mathit{0.790 \pm 0.001}}$ & $0.7286 \pm 0.0037$ & $0.7324 \pm 0.0016$ & $0.6968 \pm 0.0013$ & $0.7177 \pm 0.0049$ & $0.7133 \pm 0.0033$ & $0.5620$ & $0.7005$ & $0.7116 \pm 0.0051$ & $0.7237 \pm 0.0022$ & $\mathbf{0.8232 \pm 0.0023}$ \\
CYP3A4 Sub & AUROC $\uparrow$ & $0.667 \pm 0.019$ & $0.6394 \pm 0.0262$ & $0.6472 \pm 0.0261$ & $0.6553 \pm 0.0086$ & $0.6386 \pm 0.0150$ & $0.6571 \pm 0.0155$ & $\underline{\mathit{0.6730}}$ & $0.6574 \pm 0.0040$ & $0.6411 \pm 0.0195$ & $\mathbf{0.6757 \pm 0.0150}$ & $0.6671 \pm 0.0086$ \\
CYP3A4 Inh & AUPRC $\uparrow$ & $0.916 \pm 0.000$ & $\mathbf{0.9247 \pm 0.0014}$ & $\underline{\mathit{0.9176 \pm 0.0003}}$ & $0.8838 \pm 0.0008$ & $0.8901 \pm 0.0020$ & $0.8829 \pm 0.0015$ & $0.7470$ & $0.8642 \pm 0.0024$ & $0.8810 \pm 0.0029$ & $0.9143 \pm 0.0013$ & $0.8878 \pm 0.0012$ \\
Half-Life & Spearman $\uparrow$ & $0.576 \pm 0.025$ & $0.5473 \pm 0.0268$ & $\underline{\mathit{0.5835 \pm 0.0188}}$ & $0.5463 \pm 0.0169$ & $0.4985 \pm 0.0236$ & $0.5545 \pm 0.0161$ & $0.1500$ & $0.5370 \pm 0.0222$ & $0.2771 \pm 0.0553$ & $0.5734 \pm 0.0314$ & $\mathbf{0.6054 \pm 0.0197}$ \\
\midrule
\rowcolor{categoryheader}
\multicolumn{13}{c}{\textit{Excretion}} \\
CL Hepatocyte & Spearman $\uparrow$ & $\mathbf{0.536 \pm 0.020}$ & $0.4517 \pm 0.0087$ & $0.4376 \pm 0.0273$ & $\underline{\mathit{0.4977 \pm 0.0050}}$ & $0.3689 \pm 0.0329$ & $0.4382 \pm 0.0115$ & $0.3020$ & $0.4608 \pm 0.0048$ & $0.4489 \pm 0.0170$ & $0.4821 \pm 0.0119$ & $0.4787 \pm 0.0144$ \\
CL Microsome & Spearman $\uparrow$ & $0.630 \pm 0.010$ & $\mathbf{0.6399 \pm 0.0113}$ & $0.5953 \pm 0.0143$ & $0.5753 \pm 0.0080$ & $0.5714 \pm 0.0126$ & $0.5656 \pm 0.0229$ & $0.5040$ & $0.5421 \pm 0.0272$ & $0.5554 \pm 0.0167$ & $\underline{\mathit{0.6327 \pm 0.0078}}$ & $0.6100 \pm 0.0212$ \\
\midrule
\rowcolor{categoryheader}
\multicolumn{13}{c}{\textit{Toxicity}} \\
Ames & AUROC $\uparrow$ & $0.871 \pm 0.002$ & $0.8718 \pm 0.0019$ & $0.8723 \pm 0.0021$ & $0.8688 \pm 0.0047$ & $0.8714 \pm 0.0037$ & $0.8691 \pm 0.0014$ & $0.7260$ & $0.8643 \pm 0.0008$ & $0.8640 \pm 0.0037$ & $\underline{\mathit{0.8776 \pm 0.0126}}$ & $\mathbf{0.8794 \pm 0.0026}$ \\
DILI & AUROC $\uparrow$ & $\mathbf{0.956 \pm 0.006}$ & $0.9292 \pm 0.0025$ & $0.9306 \pm 0.0064$ & $0.9059 \pm 0.0081$ & $0.8965 \pm 0.0192$ & $0.9180 \pm 0.0027$ & $0.9050$ & $0.9151 \pm 0.0038$ & $0.9263 \pm 0.0180$ & $\underline{\mathit{0.9369 \pm 0.0100}}$ & $0.9315 \pm 0.0014$ \\
hERG & AUROC $\uparrow$ & $0.880 \pm 0.002$ & $0.8447 \pm 0.0093$ & $0.8526 \pm 0.0053$ & $0.8436 \pm 0.0043$ & $0.8411 \pm 0.0117$ & $0.8480 \pm 0.0120$ & $0.7340$ & $\mathbf{0.8941 \pm 0.0212}$ & $0.8350 \pm 0.0126$ & $\underline{\mathit{0.8862 \pm 0.0123}}$ & $0.8800 \pm 0.0082$ \\
LD50 & MAE $\downarrow$ & $\underline{\mathit{0.552 \pm 0.009}}$ & $0.5813 \pm 0.0072$ & $0.5898 \pm 0.0110$ & $0.5994 \pm 0.0015$ & $0.6149 \pm 0.0162$ & $0.6124 \pm 0.0088$ & $0.7340$ & $0.6190 \pm 0.0081$ & $0.6181 \pm 0.0083$ & $\mathbf{0.5019 \pm 0.0131}$ & $0.5682 \pm 0.0102$ \\
\bottomrule
\end{tabular}
\end{adjustbox}
\end{sidewaystable}

\section{Per-Task Scores on the Polaris Benchmark}
\label{app:polaris-table}

Table~\ref{tab:polaris-target} reports per-task results on the 11 Polaris hold-out tasks introduced in Section~\ref{sec:exp-amortize}.
Among agents with budgeted search ($B=20$), \method ranks first on eight of eleven tasks.
\method-Zero achieves the top score overall on \texttt{adme-fang-hclint} and second best results on six of eleven tasks,
illustrating that \method-Zero is able to maintain competitive performance at zero experiment budget.

\begin{sidewaystable}[p]
\centering
\caption{Per-task scores on the 11 Polaris hold-out datasets, reported as mean\,$\pm$\,std over 5 seeds.
Top 2 results are highlighted with \textbf{bold text} and \underline{underlined text}, respectively.
($\uparrow$) / ($\downarrow$) denotes a larger / smaller number is better.
  }
\label{tab:polaris-target}
\scriptsize
\setlength{\tabcolsep}{3pt}
\renewcommand{\arraystretch}{1.15}
\begin{adjustbox}{max width=\textheight}
\begin{tabular}{l l c r r r r r r r r r}
\toprule
& & & \multicolumn{8}{c}{Budgeted ($B = 20$)} & \\
\cmidrule(lr){4-11}
Dataset & Metric & \multicolumn{1}{c}{Leaderboard}
  & \multicolumn{1}{c}{Autoresearch}
  & \multicolumn{1}{c}{ShinkaEvolve}
  & \multicolumn{1}{c}{Claude Code}
  & \multicolumn{1}{c}{MLEvolve}
  & \multicolumn{1}{c}{Aira-Dojo}
  & \multicolumn{1}{c}{STELLA}
  & \multicolumn{1}{c}{Agentomics}
  & \multicolumn{1}{c}{\method (Ours)}
  & \multicolumn{1}{c}{\method-Zero} \\
\midrule
\rowcolor{categoryheader}
\multicolumn{12}{c}{\textit{ADME-Fang regression}} \\
  adme-fang-hclint & Pearson $\uparrow$ & {---} & $0.6949 \pm 0.0070$ & $0.7082 \pm 0.0067$ & $0.7131 \pm 0.0032$ & $0.7148 \pm 0.0057$ & $0.7015 \pm 0.0057$ & $0.6706 \pm 0.0038$ & $0.6970 \pm 0.0063$ & $\underline{\mathit{0.7673 \pm 0.0027}}$ & $\mathbf{0.7731 \pm 0.0014}$ \\
  adme-fang-rclint & Pearson $\uparrow$ & {---} & $0.7215 \pm 0.0038$ & $0.6764 \pm 0.0014$ & $0.7318 \pm 0.0026$ & $0.7336 \pm 0.0089$ & $0.7058 \pm 0.0047$ & $0.7168 \pm 0.0021$ & $0.7175 \pm 0.0052$ & $\mathbf{0.8001 \pm 0.0031}$ & $\underline{\mathit{0.7820 \pm 0.0012}}$ \\
  adme-fang-perm & Pearson $\uparrow$ & $0.725$ & $0.7952 \pm 0.0073$ & $0.8087 \pm 0.0042$ & $0.8065 \pm 0.0021$ & $0.8187 \pm 0.0077$ & $0.7855 \pm 0.0062$ & $0.7914 \pm 0.0065$ & $0.7720 \pm 0.0032$ & $\mathbf{0.8650 \pm 0.0029}$ & $\underline{\mathit{0.8618 \pm 0.0047}}$ \\
  adme-fang-solu & Pearson $\uparrow$ & $\mathbf{0.781}$ & $0.6218 \pm 0.0163$ & $0.3994 \pm 0.1485$ & $0.6124 \pm 0.0030$ & $0.6090 \pm 0.0139$ & $0.6046 \pm 0.0103$ & $0.5841 \pm 0.0115$ & $0.5868 \pm 0.0128$ & $\underline{\mathit{0.7174 \pm 0.0090}}$ & $0.7086 \pm 0.0108$ \\
  adme-fang-hppb & Pearson $\uparrow$ & $\mathbf{0.886}$ & $0.8080 \pm 0.0302$ & $0.7284 \pm 0.0424$ & $0.8276 \pm 0.0004$ & $0.8217 \pm 0.0189$ & $0.8013 \pm 0.0185$ & $0.7612 \pm 0.0481$ & $0.8128 \pm 0.0228$ & $\underline{\mathit{0.8320 \pm 0.0180}}$ & $0.8192 \pm 0.0072$ \\
  adme-fang-rppb & Pearson $\uparrow$ & $\underline{\mathit{0.892}}$ & $0.7167 \pm 0.0739$ & $0.5261 \pm 0.1282$ & $0.8514 \pm 0.0037$ & $0.7641 \pm 0.0259$ & $0.7991 \pm 0.0263$ & $0.7486 \pm 0.0833$ & $0.8000 \pm 0.0267$ & $\mathbf{0.8988 \pm 0.0058}$ & $0.8370 \pm 0.0123$ \\
\midrule
\rowcolor{categoryheader}
\multicolumn{12}{c}{\textit{PKIS2 kinase regression}} \\
  pkis2-egfr-wt-reg & MSE $\downarrow$ & $459.93$ & $485.30 \pm 18.72$ & $432.78 \pm 25.35$ & $446.41 \pm 2.92$ & $471.17 \pm 22.66$ & $475.80 \pm 17.60$ & $469.91 \pm 32.25$ & $494.72 \pm 31.39$ & $\mathbf{398.52 \pm 12.10}$ & $\underline{\mathit{403.46 \pm 10.13}}$ \\
  pkis2-kit-wt-reg & MSE $\downarrow$ & $849.61$ & $856.88 \pm 19.36$ & $805.95 \pm 7.55$ & $\mathbf{766.19 \pm 6.91}$ & $853.62 \pm 25.76$ & $799.07 \pm 53.90$ & $918.33 \pm 20.45$ & $805.15 \pm 51.65$ & $\underline{\mathit{786.60 \pm 23.56}}$ & $786.63 \pm 22.95$ \\
  pkis2-ret-wt-reg & MSE $\downarrow$ & $589.94$ & $766.01 \pm 40.66$ & $882.78 \pm 74.50$ & $703.35 \pm 6.37$ & $785.25 \pm 81.87$ & $794.20 \pm 57.86$ & $\mathbf{524.97 \pm 62.54}$ & $785.26 \pm 48.29$ & $\underline{\mathit{542.69 \pm 37.80}}$ & $553.62 \pm 27.85$ \\
\midrule
\rowcolor{categoryheader}
\multicolumn{12}{c}{\textit{PKIS2 kinase classification}} \\
  pkis2-kit-wt-cls & AUPRC $\uparrow$ & $0.646$ & $0.6110 \pm 0.0238$ & $0.6749 \pm 0.0137$ & $0.6273 \pm 0.0026$ & $0.5843 \pm 0.0263$ & $0.5976 \pm 0.0172$ & --- & $0.6182 \pm 0.0197$ & $\mathbf{0.7047 \pm 0.0227}$ & $\underline{\mathit{0.6873 \pm 0.0174}}$ \\
  pkis2-ret-wt-cls & AUPRC $\uparrow$ & $\mathbf{0.885}$ & $0.7375 \pm 0.0462$ & $0.7372 \pm 0.0067$ & $0.6338 \pm 0.0218$ & $0.5353 \pm 0.0760$ & $0.6043 \pm 0.0688$ & $0.5805 \pm 0.0150$ & $0.5950 \pm 0.0707$ & $\underline{\mathit{0.8467 \pm 0.0345}}$ & $0.8321 \pm 0.0199$ \\
\bottomrule
\end{tabular}
\end{adjustbox}
\end{sidewaystable}

\section{Per-Task Budget Trajectories on the 17 Held-Out Tasks}
\label{app:per-task-trajectories}

Figure~\ref{fig:cross-task-efficiency} in the main paper averages best-so-far performance over tasks within each evaluation set. Figure~\ref{fig:per-task-all} breaks this down and shows the best-so-far score on each individual held-out task as a function of experiment budget $B$.

\begin{figure}[h]
\centering
\includegraphics[width=\textwidth]{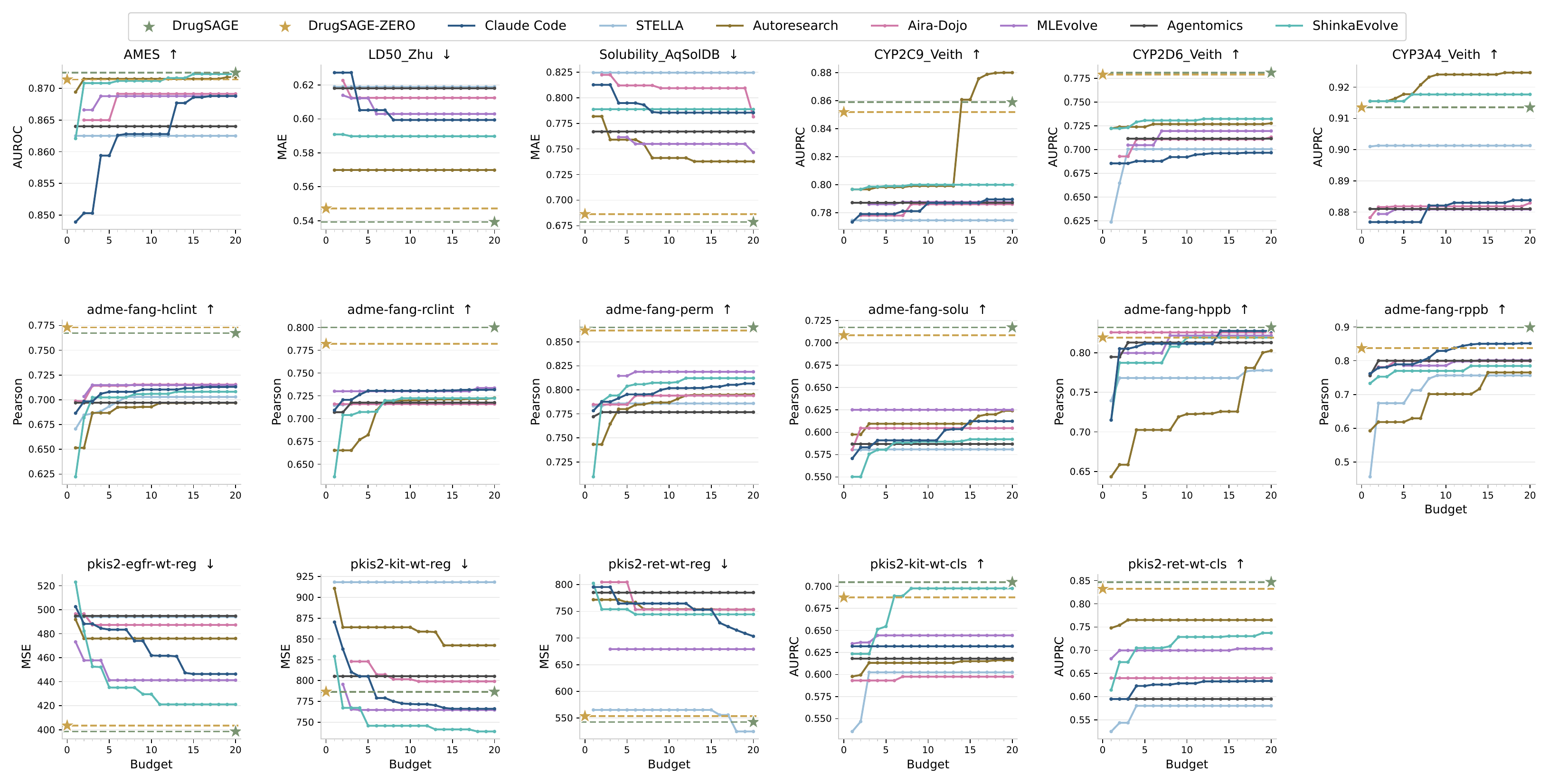}
\caption{Per-task budget trajectories on all 17 held-out tasks: 6 TDC-ADMET tasks (top one row) and 11 Polaris tasks (bottom two rows).}
\label{fig:per-task-all}
\end{figure}

\section{Per-Task Zero-Test-Time Search Results}
\label{app:zero-shot-details}

This section explains the routing decisions made by \method-Zero on the 17 held-out tasks. 
For each task we report which task in $\mathcal{Z}$ the router matched it to, the solution it transferred, and the resulting zero-shot score (mean\,$\pm$\,std over 5 seeds).

\subsection{TDC-ADMET Benchmark Held-Out Tasks}
\label{app:zero-shot-admet}

Table~\ref{tab:admet-best} reports the analog task memory from $\mathcal{Z}$ that the router selected, the transferred solution, and its corresponding performance for each of the 6 ADMET target tasks.
Ames matches to BBB\_Martins, both of which are AUROC binary classification tasks, while the biological domains differ (mutagenicity vs.\ blood--brain barrier penetration).
LD50 and Solubility both match to Lipophilicity\_AstraZeneca, the largest MAE regression task in $\mathcal{Z}$, and inherit the same three-model ensemble.
The three CYP inhibition tasks (CYP2C9, CYP2D6, CYP3A4 Veith) all match to CYP2C9\_Substrate, despite the task-type difference (substrate vs.\ inhibition classification); the router generalizes across this distinction because the metric (AUPRC) and molecular domain (CYP enzyme) align.
Across all three patterns, the matched task's best solution transfers without modification, indicating that metric and task-type alignment in $\mathcal{Z}$ may be sufficient for effective zero-shot transfer even when the biological context does not fully overlap.

\begin{table}[h]
\centering
\caption{Zero-shot routing results on the 6 TDC-ADMET held-out tasks, performance reported as mean\,$\pm$\,std over 5 seeds. \emph{Matched task}: the task in $\mathcal{Z}$ the router selected.}
\label{tab:admet-best}
\small
\setlength{\tabcolsep}{4pt}
\renewcommand{\arraystretch}{1.1}
\begin{adjustbox}{max width=\textwidth}
\begin{tabular}{llllc}
\toprule
ADMET task & Metric & Matched Task & Transferred Solution & \textsc{DrugSAGE-zero} \\
\midrule
Ames                & AUROC $\uparrow$ & BBB\_Martins                       & minimol + chemeleon + lantern-radr-ensemble       & $0.8714 \pm 0.0026$ \\
LD50                & MAE $\downarrow$ & Lipophilicity\_AstraZeneca         & minimol + chemprop-rdkit + chemprop & $0.5472 \pm 0.0104$ \\
Solubility          & MAE $\downarrow$ & Lipophilicity\_AstraZeneca         & minimol + chemprop-rdkit + chemprop & $0.6863 \pm 0.0057$ \\
CYP2C9 Inh          & AUPRC $\uparrow$ & CYP2C9\_Substrate\_CarbonMangels   & maplight-gnn + minimol + lantern-radr-ensemble    & $0.8518 \pm 0.0031$ \\
CYP2D6 Inh          & AUPRC $\uparrow$ & CYP2C9\_Substrate\_CarbonMangels   & maplight-gnn + minimol + lantern-radr-ensemble    & $0.7791 \pm 0.0067$ \\
CYP3A4 Inh          & AUPRC $\uparrow$ & CYP2C9\_Substrate\_CarbonMangels   & maplight-gnn + minimol + lantern-radr-ensemble    & $0.9135 \pm 0.0015$ \\
\bottomrule
\end{tabular}
\end{adjustbox}
\end{table}

\subsection{Polaris Benchmark Held-Out Tasks}
\label{app:zero-shot-polaris}

Table~\ref{tab:polaris-best} reports zero-shot routing results across the 11 Polaris held-out tasks.
For the nine regression tasks, neither Pearson nor MSE appears in $\mathcal{Z}$, so the router cannot find an exact metric match and instead selects the most similar pool task by task type and molecular domain.
The two PKIS2 kinase classification tasks both use AUPRC, which is present in $\mathcal{Z}$, and the router matches them directly to CYP substrate tasks on the basis of shared metric and task type, despite the domain gap between kinase inhibition and enzyme substrate activity.

\begin{table}[h]
\centering
\caption{Zero-shot routing results on the 11 Polaris held-out tasks, performance reported as mean\,$\pm$\,std over 5 seeds. \emph{Matched task}: the task in $\mathcal{Z}$ the router selected.}
\label{tab:polaris-best}
\small
\setlength{\tabcolsep}{4pt}
\renewcommand{\arraystretch}{1.1}
\begin{adjustbox}{max width=\textwidth}
\begin{tabular}{lllll}
\toprule
Polaris task & Metric & Matched Task & Transferred Solution & \textsc{DrugSAGE-zero} \\
\midrule
\rowcolor{categoryheader} \multicolumn{5}{c}{\textit{ADME-Fang regression}} \\
adme-fang-hclint   & Pearson $\uparrow$ & Lipophilicity\_AstraZeneca       & minimol + chemprop-rdkit + chemprop & $0.7731 \pm 0.0014$ \\
adme-fang-rclint   & Pearson $\uparrow$ & Lipophilicity\_AstraZeneca       & minimol + chemprop-rdkit + chemprop & $0.7820 \pm 0.0012$ \\
adme-fang-perm        & Pearson $\uparrow$ & Lipophilicity\_AstraZeneca       & minimol + chemprop-rdkit + chemprop & $0.8618 \pm 0.0047$ \\
adme-fang-solu        & Pearson $\uparrow$ & CYP2C9\_Substrate\_CarbonMangels & maplight-gnn + minimol + lantern-radr-ensemble & $0.7086 \pm 0.0108$ \\
adme-fang-hppb        & Pearson $\uparrow$ & Lipophilicity\_AstraZeneca       & minimol + chemprop-rdkit + chemprop & $0.8192 \pm 0.0072$ \\
adme-fang-rppb        & Pearson $\uparrow$ & Lipophilicity\_AstraZeneca       & minimol + chemprop-rdkit + chemprop & $0.8370 \pm 0.0123$ \\
\midrule
\rowcolor{categoryheader} \multicolumn{5}{c}{\textit{PKIS2 kinase regression}} \\
pkis2-egfr-wt-reg   & MSE $\downarrow$   & Lipophilicity\_AstraZeneca       & minimol + chemprop-rdkit + chemprop & $403.46 \pm 10.13$ \\
pkis2-kit-wt-reg    & MSE $\downarrow$   & CYP2C9\_Substrate\_CarbonMangels & maplight-gnn + minimol + lantern-radr-ensemble & $786.63 \pm 22.95$ \\
pkis2-ret-wt-reg    & MSE $\downarrow$   & CYP2C9\_Substrate\_CarbonMangels & maplight-gnn + minimol + lantern-radr-ensemble & $553.62 \pm 27.85$ \\
\midrule
\rowcolor{categoryheader} \multicolumn{5}{c}{\textit{PKIS2 kinase classification}} \\
pkis2-kit-wt-cls    & AUPRC $\uparrow$   & CYP2D6\_Substrate\_CarbonMangels & admetrix                                     & $0.6873 \pm 0.0174$ \\
pkis2-ret-wt-cls    & AUPRC $\uparrow$   & CYP2C9\_Substrate\_CarbonMangels & maplight-gnn + minimol + lantern-radr-ensemble & $0.8321 \pm 0.0199$ \\
\bottomrule
\end{tabular}
\end{adjustbox}
\end{table}

\subsection{Zero-Test-Time Routing Case Study}
\label{app:zeroshot-case-study}

Figure~\ref{fig:solubility-case} illustrates a concrete instance in which zero-test-time routing produces a better solution than budgeted search from scratch.
Router matches Solubility task to Lipophilicity\_AstraZeneca, the largest MAE regression task in $\mathcal{Z}$, and transfers its best solution without modification.
The transferred solution featurizes molecules with a wide 1900-dimensional concatenation of minimol, Morgan, RDKit, and MACCS descriptors and aggregates predictions from a 10-member homogeneous dense ensemble.
The search loop converges on a 4-architecture heterogeneous ensemble that combines minimol, AttentiveFP, Admetrix, and NovoExpert with LoRA-ResNet heads, achieving MAE$\,{=}\,0.722$.
The gap suggests that the wide, homogeneous featurization strategy learned on Lipophilicity transfers more effectively to Solubility than the ensemble the agent independently discovers, a pattern consistent with the shared physicochemical nature of the two regression targets.

\begin{figure}[t]
\centering
\begin{tcbraster}[
  raster columns=2,
  raster equal height=rows,
  raster column skip=8pt,
]
\begin{tcolorbox}[
  colback=casebg, colframe=caseheader,
  coltitle=casetext, fonttitle=\bfseries\small,
  title={Zero-shot routed \hfill \texttt{ MAE$\downarrow$ = 0.686}},
  boxrule=0.5pt, arc=2pt,
  left=6pt, right=6pt, top=3pt, bottom=3pt,
]
\textbf{\footnotesize Input (1900-d)}\\[1pt]
{\scriptsize\ttfamily
x = concat(minimol(512), Morgan(1024),\\
\hspace*{3.2em}RDKit(\textasciitilde 200), MACCS(167))
}\\[4pt]
\textbf{\footnotesize Head \& ensemble}\\[1pt]
{\scriptsize\ttfamily
head = Dense(2048) x 4 + skip\\
pred = mean(head\_k(x) for k in 1..10)
}
\end{tcolorbox}
\begin{tcolorbox}[
  colback=altbg, colframe=altheader,
  coltitle=alttext, fonttitle=\bfseries\small,
  title={Best self-searched \hfill \texttt{MAE$\downarrow$ = 0.722}},
  boxrule=0.5pt, arc=2pt,
  left=6pt, right=6pt, top=3pt, bottom=3pt,
]
\textbf{\footnotesize Input (per architecture)}\\[1pt]
{\scriptsize\ttfamily
minimol\_comp.x = minimol(512)\\
attentive\_fp.x = graph\\
admetrix.x \hspace*{1.2em}= own pipeline\\
novoexpert.x \hspace*{0.4em}= own pipeline
}\\[4pt]
\textbf{\footnotesize Head \& ensemble}\\[1pt]
{\scriptsize\ttfamily
head = LoRA-ResNet (rank 8, 3 layers)\\
pred = mean(comp\_i(x\_i) for 4 archs)
}
\end{tcolorbox}
\end{tcbraster}
\caption{Solubility task case: a zero-test-time routing solution routed from Lipophilicity outperforms the best solution \method finds by searching Solubility from scratch. The routed solution combines wide multi-source features with a 10-member homogeneous ensemble; the from-scratch search converges on a 4-architecture heterogeneous ensemble.}
\label{fig:solubility-case}
\vspace{-12pt}
\end{figure}

\section{Explore Agent Workflow Details}
\label{app:explore}

This appendix expands the skill-construction pipeline summarized in Section~\ref{sec:skills}. 
\method separates skill construction from the online optimization loop. 
The Explore Agent runs offline, before budgeted search on a target task, and converts task-relevant literature into executable skills that can be read by the downstream search system.
Its output is a set of directories \texttt{skills/\{name\}/}, each containing a structured \texttt{SKILL.md}, an API snapshot, and repository metadata.

\paragraph{Phase 1: LiteratureScout.}
The Explore Agent begins from the task name, task description, and
dataset metadata. A single query often retrieves only a narrow view of
the relevant literature, so LiteratureScout asks an LLM to rewrite the
task from multiple expert perspectives, such as molecular-property
prediction, data modality, model architecture, training objective,
domain-specific constraints, and implementation availability. These
perspective-specific queries are submitted to heterogeneous literature
and code backends, including web search, paper indexes, preprint
servers, and repository search. The returned pool is deduplicated and
hard filtered to remove methods that do not match the task, papers
without usable implementations, and repositories that cannot be
resolved. The remaining candidates are ranked by combining LLM
relevance judgements with explicit implementation signals such as
recency, repository activity, documentation quality, and the presence of
training or inference examples. LiteratureScout writes the selected
papers, repository links, and short selection rationales to
\texttt{task\_memory.md} and a machine-readable companion file.

\paragraph{Phase 2: SkillBuilder.}
SkillBuilder turns each selected paper--repository pair into an
executable skill. It clones the referenced repository and constructs an
abstract-syntax-tree (AST) snapshot: a structural parse of the source
code that records importable modules, public functions and classes,
model constructors, and nearby usage examples. The snapshot is stored
with the skill as an implementation contract. Given the paper summary,
repository metadata, and AST snapshot, the LLM writes a structured
\texttt{SKILL.md} describing the task type, dependencies, callable entry
points, and a minimal quick-start example. The prompt requires
executable snippets to use only identifiers observed in the AST
snapshot, which targets the main failure mode of LLM-written integration
code: plausible but non-existent class names, methods, and import paths.

\paragraph{Tiered validation and repair.}
A generated skill is not admitted to the shared library immediately.
DrugSAGE validates it through escalating checks. Tier~0 verifies the
skill file itself, including syntax, package identity, dependency
metadata, and import resolvability. Tier~1 performs a dependency dry run
of the quick-start section against the installed package, checking that
referenced calls and objects are consistent with the repository
snapshot. Tier~2 executes the quick-start end-to-end in a per-skill
conda sandbox, catching dependency drift, missing data assumptions, and
runtime errors. When a tier fails, SkillBuilder repairs the offending
section using the AST snapshot, package metadata, and observed error
message, then reruns validation. Only skills that pass all tiers are
admitted to the library $\mathcal{K}$ and exposed to the online search loop through the tool catalog.

\section{Formal Definition of Cross-Task Transfer Scores}
\label{app:transfer-normalization}

This section provides the formal definition of the per-task standardized score underlying the cross-task transfer terms $\mathrm{transfer}(f)$ (\Cref{eq:family-select}) and $\mathrm{transfer}(v_i)$ (\Cref{eq:node-select}) introduced in Section~\ref{sec:memory}.

\subsection{Per-Task Standardized Score}
\label{app:standardized-score}

Since historical tasks use heterogeneous metrics whose raw values are not directly comparable, all historical scores are first converted to a bounded, normalized utility score before aggregation.
Each node $v$ evaluated on a historical task $\tau$ with evaluation metric $\mu_\tau$ receives a raw score $x_\tau(v)$. We convert it to a standardized score $\bar{s}_\tau(v) \in [-1,1]$ in three steps.

\paragraph{Step 1: Direction alignment.}
Convert all metrics to a higher-is-better orientation:
\begin{equation}
s_\tau(v) =
\begin{cases}
  \phantom{-}x_\tau(v),  & \text{if } \mu_\tau \text{ is higher-is-better}, \\[3pt]
  -x_\tau(v), & \text{if } \mu_\tau \text{ is lower-is-better}.
\end{cases}
\label{eq:direction-align}
\end{equation}

\paragraph{Step 2: Within-task robust normalization.}
Normalize $s_\tau(v)$ relative to all completed nodes $V_\tau$ evaluated on task $\tau$ with the median as location and the median absolute deviation (MAD) as scale:
\begin{equation}
\tilde{s}_\tau(v) =
\frac{s_\tau(v) - \operatorname{median}_{u \in V_\tau}\, s_\tau(u)}
     {\max \bigl(\operatorname{MAD}_{u \in V_\tau} s_\tau(u),\;\epsilon\bigr)},
\label{eq:robust-norm}
\end{equation}
where $\operatorname{MAD}_{u \in V_\tau} s_\tau(u) \triangleq \operatorname{median}_{u \in V_\tau} \bigl|s_\tau(u) - \operatorname{median}_{u \in V_\tau} s_\tau(u)\bigr|$ and $\epsilon > 0$ is a small constant that prevents division by zero when all solutions achieve identical scores.
The choice of median and MAD over mean and standard deviation makes the normalization robust to outlier solutions.

\paragraph{Step 3: Map to $[-1,1]$.}
Apply the logistic sigmoid $\sigma$ to map the unbounded $\tilde{s}_\tau(v)$ to $[-1,1]$:
\begin{equation}
\bar{s}_\tau(v) = 2\,\sigma\!\bigl(\tilde{s}_\tau(v)\bigr) - 1 \;\in\; [-1,\,1].
\label{eq:squash}
\end{equation}
The monotone sigmoid maps any finite normalized score to the bounded interval, ensuring no single historical task can contribute an unbounded signal to the transfer prior.

\subsection{Task-Similarity Weights}
\label{app:task-weights}

Given a target task $\tau_0$, the weight $w(\tau,\tau_0) \geq 0$ measures the relevance of each historical task $\tau \in \mathcal{Z}$:
\begin{equation}
w(\tau,\tau_0) = w_{\text{metric}} \cdot w_{\text{type}} \cdot w_{\text{size}} \cdot w_{\text{emb}},
\label{eq:task-weight}
\end{equation}
where each factor captures one dimension of task similarity:
\begin{itemize}[leftmargin=*]
  \item \textbf{Metric match.} $w_{\text{metric}} = \mathbf{1}[\mu_\tau = \mu_{\tau_0}] + \delta$ rewards exact metric match with a small fallback $\delta > 0$ for metrics in the same family (e.g., both are correlation-based);
  \item \textbf{Task match.} $w_{\text{type}} = \mathbf{1}\bigl[\mathrm{type}(\tau)=\mathrm{type}(\tau_0)\bigr]$ matches the task type (classification vs.\ regression);
  \item \textbf{Dataset size match.} $w_{\text{size}} = \exp\ \bigl(-\gamma\,\bigl|\log|\mathcal{D}_\tau| - \log|\mathcal{D}_{\tau_0}|\bigr|\bigr)$ penalizes dataset-size mismatch on a log scale, with decay rate $\gamma > 0$;
  \item \textbf{Description similarity.} $w_{\text{emb}} = \max\ \bigl(\cos(\mathbf{e}_\tau,\,\mathbf{e}_{\tau_0}),\;0\bigr)$ is the clamped cosine similarity between task-description embeddings $\mathbf{e}_\tau$ obtained from an LLM embedding API.
\end{itemize}

\subsection{Formal Definition of Transfer Scores}
\label{app:transfer-scores}

Using the standardized scores from Appendix~\ref{app:standardized-score} and the weights from Appendix~\ref{app:task-weights}, we define the transfer terms in \Cref{eq:family-select,eq:node-select} as weighted averages over all historical tasks in $\mathcal{Z}$.

\paragraph{Family-level transfer.}
The transfer score for model family $f$ aggregates the standardized scores of all solutions in $f$ across historical tasks:
\begin{equation}
\mathrm{transfer}(f)
= \frac{\displaystyle\sum_{\tau \in \mathcal{Z}} w(\tau,\tau_0)\;\cdot\;\operatorname{mean}_{v \in f,\,\tau}\;\bar{s}_\tau(v)}
       {\displaystyle\sum_{\tau \in \mathcal{Z}} w(\tau,\tau_0)},
\label{eq:transfer-family}
\end{equation}
where the inner mean is over all solutions in family $f$ evaluated on task $\tau$.

\paragraph{Node-level transfer.}
For an individual solution $v_i$, we search for the solution in each historical task $\tau$ that shares the same model family and modification type as $v_i$, denoted $\mathrm{matched}(v_i,\tau)$, and aggregate its standardized score:
\begin{equation}
\mathrm{transfer}(v_i)
= \frac{\displaystyle\sum_{\tau \in \mathcal{Z}} w(\tau,\tau_0)\;\cdot\;\bar{s}_\tau\!\bigl(\mathrm{matched}(v_i,\tau)\bigr)}
       {\displaystyle\sum_{\tau \in \mathcal{Z}} w(\tau,\tau_0)}.
\label{eq:transfer-node}
\end{equation}

\subsection{Boundedness Guarantee}
\label{app:transfer-bound}

Since $\bar{s}_\tau(v)\in[-1,1]$ for all $v$ and $\tau$ by construction (\Cref{eq:squash}), and all weights $w(\tau,\tau_0)\geq 0$, both transfer scores satisfy $|\mathrm{transfer}(f)|\leq 1$ and $|\mathrm{transfer}(v_i)|\leq 1$.
This directly satisfies the bounded-transfer assumption $|\mathrm{transfer}(f)|\leq C_0$ in Assumption~\ref{ass:bounded} of Appendix~\ref{append:thepry proof} with $C_0=1$, ensuring that the cross-task prior cannot introduce unbounded bias into the UCB selection policy and the regret order of Theorem~\ref{thm:family} is preserved.

\section{Proof of adding the cross-task causal-factor term will not change the upperbound of speed}
\label{append:thepry proof}
\subsection{Problem Setup}
We prove that the transfer bonus term $\mathrm{transfer}(f)$ for family-level selection and $w_v$ for node-level selection
in the family-level selector and node-level selector, respectively, \emph{do not break the sublinear regret guarantee} of the underlying UCB1 policy.
In particular, the asymptotic order of the regret bound remains identical to that of
standard UCB1 algorithm, and the causal terms affect only the leading constant. Therefore, our modification does not change the upperbound of complexity after introducing cross-task memory.

\subsubsection{Assumptions}

\begin{assumption}[Boundedness]\label{ass:bounded}
Rewards satisfy $r \in [0,1]$. The clipped causal bonus terms are uniformly bounded:
\[
  |\mathrm{transfer}(f)| \leq C_0 < \infty \quad \forall\, f \in \calF
\]
\end{assumption}

\begin{assumption}[Asymptotic Consistency]\label{ass:consistent}
The CEG estimator is consistent: as $n(f)\to\infty$,
\[
  \mathrm{transfer}(f) \;\xrightarrow{\;}\; \Delta^{*}(f),
\]
where $\Delta^{*}(f)$ is the ground-truth cross-task transfer value for family $f$.
More precisely, we construct the estimation error satisfies
\[
  \bigl|\mathrm{transfer}(f) - \Delta^{*}(f)\bigr| \;\leq\; \frac{C_0}{\sqrt{n(f)}}
\]
for some constant $C_0 > 0$.
\end{assumption}

\begin{assumption}[Optimism]\label{ass:optimism}
The CEG bonus is (asymptotically) optimistic, consistent with the normalized score designed in UCB:
\[
  \mathrm{transfer}(f) \;\geq\; \Delta^{*}(f) - \epsilon_t, \quad \epsilon_t \to 0 \text{ as } t\to\infty.
\]
\end{assumption}

\subsection{Family-Level Selection}

\subsubsection{Setup and Regret Definition}

The family-level UCB index is
\begin{equation}\label{eq:family-ucb}
  f_t = \operatorname*{arg\,max}_{f\in\calF}\;
        \tilde{r}(f) + \alpha\sqrt{\frac{\ln(t+1)}{n(f)}} + \mathrm{transfer}(f).
\end{equation}

Let $f^{*} = \operatorname*{arg\,max}_{f}\,\E[\tilde{r}(f) + \Delta^{*}(f)]$ be the optimal
family under the adjusted reward. Define the sub-optimality gap
\[
  \Delta_f \;=\; \bigl(\tilde{r}(f^{*}) + \Delta^{*}(f^{*})\bigr)
              - \bigl(\tilde{r}(f)   + \Delta^{*}(f)\bigr) \;>\; 0
  \quad \text{for } f \neq f^{*}.
\]
The $T$-round cumulative regret is
\[
  R_T = \sum_{t=1}^{T}\Bigl[\bigl(\tilde{r}(f^{*}) + \Delta^{*}(f^{*})\bigr)
                            -\bigl(\tilde{r}(f_t)  + \Delta^{*}(f_t)\bigr)\Bigr].
\]

\subsubsection{Lemma: Causal Bonus Is Order-Compatible with UCB Exploration}

\begin{lemma}[Order Compatibility]\label{lem:order}
Under Assumption~\ref{ass:consistent}, decompose the CEG estimator as
\[
  \mathrm{transfer}(f) = \Delta^{*}(f) + \xi_t(f), \quad |\xi_t(f)| \leq \frac{C_0}{\sqrt{n(f)}}.
\]
Then the augmented UCB index~\eqref{eq:family-ucb} can be written as
\begin{equation}\label{eq:decomp}
  \mathrm{UCB}_t(f)
  = \underbrace{\tilde{r}(f) + \Delta^{*}(f)}_{\text{adjusted true value}}
  + \underbrace{\alpha\sqrt{\frac{\ln(t+1)}{n(f)}} + \xi_t(f)}_{\text{confidence term}},
\end{equation}
where the confidence term satisfies
\[
  \left|\alpha\sqrt{\frac{\ln(t+1)}{n(f)}} + \xi_t(f)\right|
  \;\leq\;
  \frac{\alpha\sqrt{\ln(t+1)} + C_0}{\sqrt{n(f)}}
  \;=\; O\!\left(\sqrt{\frac{\ln t}{n(f)}}\right).
\]
Hence the causal bonus introduces no new asymptotic order.
\end{lemma}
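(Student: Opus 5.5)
The decomposition in \eqref{eq:decomp} is pure algebra. I will substitute the representation $\mathrm{transfer}(f)=\Delta^{*}(f)+\xi_t(f)$ from Assumption~\ref{ass:consistent} into the index \eqref{eq:family-ucb}. Then I regroup the four summands into the adjusted true value $\tilde{r}(f)+\Delta^{*}(f)$ and the remaining confidence term $\alpha\sqrt{\ln(t+1)/n(f)}+\xi_t(f)$. The decomposition $\xi_t(f):=\mathrm{transfer}(f)-\Delta^{*}(f)$ is a definition, so the only substantive content is the bound on $\xi_t(f)$. That bound is supplied verbatim by Assumption~\ref{ass:consistent}: $|\xi_t(f)|\le C_0/\sqrt{n(f)}$.

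For the magnitude bound I will apply the triangle inequality:
\[
\left|\alpha\sqrt{\tfrac{\ln(t+1)}{n(f)}}+\xi_t(f)\right|\le \alpha\sqrt{\tfrac{\ln(t+1)}{n(f)}}+\tfrac{C_0}{\sqrt{n(f)}}=\tfrac{\alpha\sqrt{\ln(t+1)}+C_0}{\sqrt{n(f)}}.
\]
This uses $\alpha\ge 0$ and $n(f)\ge 1$, so every term is well defined and nonnegative.

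For the order statement I will note that for $t\ge 2$ we have $\ln(t+1)\ge \ln 3>1$, and hence $C_0\le C_0\sqrt{\ln(t+1)}$. This gives the bound $(\alpha+C_0)\sqrt{\ln(t+1)/n(f)}$. Finally $\ln(t+1)\le 2\ln t$ for $t\ge 2$, so the term is $O(\sqrt{\ln t/n(f)})$. The small-$t$ cases are absorbed into the constant.

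The only delicate point is interpretive rather than technical: the $O(\cdot)$ must be uniform in $f$ and $n(f)$, with constant $\sqrt{2}(\alpha+C_0)$. I will state it explicitly in that form. This explicit form is also what feeds the later UCB1-style argument, where squaring via $(a+b)^2\le 2(a^2+b^2)$ produces the $8(\alpha^2+C_0^2)\ln T/\Delta_f^2$ pull count.
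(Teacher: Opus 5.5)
Your proposal is correct and follows the paper's proof: you substitute $\mathrm{transfer}(f)=\Delta^{*}(f)+\xi_t(f)$, bound the confidence term by the triangle inequality using $|\xi_t(f)|\le C_0/\sqrt{n(f)}$, and read off the order. Your explicit constant $\sqrt{2}(\alpha+C_0)$ for $t\ge 2$ is a more careful version of the paper's one-line claim $\alpha\sqrt{\ln(t+1)}+C_0=O(\sqrt{\ln t})$, with one small nit: at $t=1$ the right side $\sqrt{\ln t}$ vanishes, so small $t$ cannot be absorbed into a multiplicative constant, and the $O(\cdot)$ statement should simply be stated as holding for $t\ge 2$.
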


\begin{proof}
The decomposition follows directly from Assumption~\ref{ass:consistent}.
The bound on the confidence term follows from the triangle inequality:
\[
  \left|\alpha\sqrt{\frac{\ln(t+1)}{n(f)}} + \xi_t(f)\right|
  \;\leq\;
  \alpha\sqrt{\frac{\ln(t+1)}{n(f)}} + |\xi_t(f)|
  \;\leq\;
  \frac{\alpha\sqrt{\ln(t+1)} + C_0}{\sqrt{n(f)}}.
\]
Since $\alpha\sqrt{\ln(t+1)} + C_0 = O(\sqrt{\ln t})$, the entire confidence term
is $O\!\bigl(\sqrt{\ln t / n(f)}\bigr)$, matching the standard UCB rate.
\end{proof}

\subsubsection{Theorem: Family-Level Regret Bound}
\label{appendix:proof_thm}

\begin{theorem}[Family-Level Regret Upper Bound]\label{thm:family}
Under Assumptions~\ref{ass:bounded}--\ref{ass:optimism}, the augmented UCB
policy~\eqref{eq:family-ucb} achieves cumulative regret
\begin{equation}\label{eq:family-bound}
  R_T \;\leq\;
  \sum_{\substack{f \in \calF \\ \Delta_f > 0}}
  \left(
    \frac{8(\alpha^2 + C_0^2)\ln T}{\Delta_f}
    + \left(1 + \frac{\pi^2}{3}\right)\Delta_f
  \right).
\end{equation}
This is $O(|\calF|\ln T)$, identical in asymptotic order to standard UCB;
the causal factor only modifies the leading constant via $C_0$.
\end{theorem}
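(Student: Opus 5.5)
We would follow the standard Auer--Cesa-Bianchi--Fischer analysis of UCB1, treating $\tilde r(f)+\Delta^*(f)$ as the ``true mean'' of arm $f$ and absorbing the transfer error $\xi_t(f)$ into an enlarged confidence radius, exactly as Lemma~\ref{lem:order} suggests. First write $R_T=\sum_{f:\Delta_f>0}\Delta_f\,\E[n_T(f)]$, so it suffices to bound the expected number of pulls of each suboptimal family by $\frac{8(\alpha^2+C_0^2)\ln T}{\Delta_f^2}+1+\frac{\pi^2}{3}$.

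Fix a suboptimal $f$ and set $\ell=\lceil 8(\alpha^2+C_0^2)\ln T/\Delta_f^2\rceil$. Then $n_T(f)\le \ell+\sum_{t}\mathbf 1[f_t=f,\ n_t(f)\ge\ell]$. By Lemma~\ref{lem:order}, the index equals the adjusted value plus a term of absolute value at most $c_t(n):=(\alpha\sqrt{\ln(t+1)}+C_0)/\sqrt n$. Using $(a+b)^2\le 2(a^2+b^2)$ (the AM--GM step) gives $c_t(n)^2\le 2(\alpha^2+C_0^2)\ln(t+1)/n$ once $\ln(t+1)\ge1$. Hence for $n\ge\ell$ we have $2c_t(n)\le\Delta_f$, up to the $\ln(t+1)$ versus $\ln T$ bookkeeping. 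Selecting $f$ over $f^*$ therefore requires one of three events: (i) the exploit estimate of $f^*$ falls below its mean by more than its radius, (ii) the exploit estimate of $f$ exceeds its mean by more than its radius, or (iii) $\Delta_f<2c_t(n_t(f))$. Event (iii) is excluded by the choice of $\ell$. The optimism assumption (Assumption~\ref{ass:optimism}) keeps the $f^*$ index from being pushed down by the transfer error beyond a vanishing $\epsilon_t$.

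We would bound the probabilities of (i) and (ii) with Chernoff--Hoeffding for $[0,1]$ rewards (Assumption~\ref{ass:bounded}), reading ``standard asymptotic convergence of the exploit estimate'' as the usual empirical-mean concentration. Each event then has probability at most $t^{-4}$. The usual union bound over $s,s_i\le t$ gives $\sum_t 2t^{-2}\le\pi^2/3$, which yields the stated count and hence the bound after multiplying by $\Delta_f$ and summing over $f$.

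The main obstacle is the concentration step. The Hoeffding radius must be at least $\sqrt{2\ln t/n}$ to give $t^{-4}$ tails, so we need $\alpha\ge\sqrt2$ or must rescale. Moreover, $\xi_t$ is deterministic and of size $C_0/\sqrt n$, not logarithmic, so it cannot be absorbed into the tail probability; it has to enter only through the gap condition (iii). We would handle this by splitting the index's confidence term into its stochastic part ($\alpha$) and its bias part ($C_0$), and applying the squared-sum inequality only in (iii). This is where the constant $8(\alpha^2+C_0^2)$ arises, and it is also where the exploit term's penalties for instability and overfitting must be assumed to preserve sub-Gaussian concentration.
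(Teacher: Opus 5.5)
Your route is the paper's route: a UCB1 analysis on the adjusted value $\tilde r(f)+\Delta^*(f)$, with the transfer error folded into an enlarged radius, $(a+b)^2\le 2(a^2+b^2)$ producing $8(\alpha^2+C_0^2)$, and $t^{-4}$ Hoeffding tails summing to $\pi^2/3$. Your three-event version, with $2c_t$ in (iii), accounts for the constant $8$ more honestly than the paper's Steps 3--4, which derive $2(\alpha^2+C_0^2)$ and then assert $8$. Your concern about $\alpha$ is also real, not a technicality. With $\alpha=0$ and $\xi_t\equiv0$ the rule is greedy and suffers linear regret with constant probability, so a lower bound such as $\alpha\ge\sqrt2$ must be added to the statement. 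Rescaling cannot remove it, because both the algorithm and the claimed constant are tied to $\alpha$. The paper's Step 1 silently replaces $\alpha\sqrt{\ln(t+1)/n}$ with $\sqrt{2\ln t/n}$.

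The step that does not go through is ``(iii) is excluded by the choice of $\ell$.'' When (i) and (ii) fail, selecting $f$ forces $\Delta_f<2\alpha\sqrt{\ln(t+1)/n_t(f)}+\xi_t(f)-\xi_t(f^*)$. Taking $n_t(f)\ge\ell$ controls $\xi_t(f)\le C_0/\sqrt{n_t(f)}$. But $-\xi_t(f^*)$ is bounded only by $\epsilon_t$ (optimism) or by $C_0/\sqrt{n_t(f^*)}$ (consistency), and neither is small just because $n_t(f)$ is large. With $\epsilon_t$ merely vanishing, this leaves an unquantified burn-in term, and it can dominate. Take deterministic rewards, $\Delta^*\equiv0$, $\xi_t(f)\equiv0$, $\xi_t(f^*)=-C_0$ while $n_t(f^*)=1$, and $\epsilon_t=C_0$ for small $t$; all three assumptions hold. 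Then the index of $f^*$ sits below $\tilde r(f)$ whenever $\alpha\sqrt{\ln(t+1)}<C_0-\Delta_f$. So $f$ is pulled for roughly $e^{(C_0-\Delta_f)^2/\alpha^2}$ consecutive rounds, far above the claimed bound once $C_0\gg\alpha$, even with $\alpha\ge\sqrt2$. The paper has the same hole: the $-C_0/\sqrt{n(f^*)}$ in its Step 1 disappears in Step 2. You can close it in either of two ways. First, strengthen optimism to $\epsilon_t\equiv0$, i.e. $\xi_t(f^*)\ge0$; your argument then gives exactly the stated bound, and the second $C_0$ in $2c_t$ becomes slack. Second, apply consistency to $f^*$ and fold $C_0/\sqrt{n_t(f^*)}$ into event (i). Its Hoeffding tail stays below $(t+1)^{-4}$ provided $(\alpha-\sqrt2)\sqrt{\ln(t+1)}\ge C_0$ for all $t\ge1$, i.e. $\alpha\ge\sqrt2+C_0/\sqrt{\ln2}$.
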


\begin{proof}
We follow the standard UCB analysis and carefully track
the causal error term $\xi_t(f)$.

\medskip
\noindent\textbf{Step 1: High-probability bound on the optimal arm.}

By Hoeffding's inequality \citep{hoeffding1963probability}, with probability at least $1 - t^{-4}$,
\[
  \tilde{r}(f^{*}) \;\geq\; \hat{r}_{n(f^{*})}(f^{*}) - \sqrt{\frac{2\ln t}{n(f^{*})}}.
\]
Therefore, using Assumption~\ref{ass:consistent} for the CEG term,
\begin{equation}\label{eq:opt-lb}
  \mathrm{UCB}_t(f^{*})
  \;\geq\;
  \tilde{r}(f^{*}) + \Delta^{*}(f^{*}) - \frac{C_0}{\sqrt{n(f^{*})}}.
\end{equation}

\medskip
\noindent\textbf{Step 2: Necessary condition for selecting a suboptimal arm.}

Arm $f \neq f^{*}$ is selected at round $t$ only if
$\mathrm{UCB}_t(f) \geq \mathrm{UCB}_t(f^{*})$.
Combining with~\eqref{eq:opt-lb} and using the decomposition~\eqref{eq:decomp}:
\[
  \tilde{r}(f) + \Delta^{*}(f)
  + \alpha\sqrt{\frac{\ln(t+1)}{n(f)}} + \frac{C_0}{\sqrt{n(f)}}
  \;\geq\;
  \tilde{r}(f^{*}) + \Delta^{*}(f^{*}).
\]
Rearranging:
\begin{equation}\label{eq:necessary}
  \left(\alpha + \frac{C_0}{\sqrt{\ln(t+1)}}\right)\sqrt{\frac{\ln(t+1)}{n(f)}}
  \;\geq\; \Delta_f.
\end{equation}

\medskip
\noindent\textbf{Step 3: Upper bound on arm pull count.}

Squaring both sides of~\eqref{eq:necessary} and solving for $n(f)$:
\[
  n(f) \;\leq\;
  \frac{\bigl(\alpha + C_0/\!\sqrt{\ln(t+1)}\bigr)^2 \ln(t+1)}{\Delta_f^2}
  \;\leq\;
  \frac{2(\alpha^2 + C_0^2)\ln T}{\Delta_f^2},
\]
where the last step uses $(\alpha + C_0/\sqrt{\ln(t+1)})^2 \leq 2(\alpha^2 + C_0^2)$
by the AM–GM inequality, valid for all $t \leq T$.

\medskip
\noindent\textbf{Step 4: Expected number of pulls.}

Using the standard UCB analysis (tail-sum decomposition),
\[
  \E[N_T(f)]
  \;\leq\;
  \frac{8(\alpha^2 + C_0^2)\ln T}{\Delta_f^2} + 1 + \frac{\pi^2}{3}.
\]

\medskip
\noindent\textbf{Step 5: Cumulative regret.}

Summing over all suboptimal families:
\[
  R_T
  = \sum_{f:\,\Delta_f > 0} \Delta_f \cdot \E[N_T(f)]
  \leq \sum_{f:\,\Delta_f > 0}
       \left(\frac{8(\alpha^2+C_0^2)\ln T}{\Delta_f}
       + \left(1+\frac{\pi^2}{3}\right)\Delta_f\right).
\]

\medskip
\noindent\textbf{Key observation.}
The bound is $O(\ln T)$, matching standard UCB. The causal factor contributes
only through $C_0$, which modifies the constant but not the asymptotic order. 

If we also consider the node-level regret bound, under the weighted sampler computation in \ref{eq:node-select}, the bonus $\max(\Delta^{\CEG}(v_i),0)$ now serves as a non-negative multiplicative determined by $w_i$. It therefore only increases sampling probability for nodes with positive CEG signal, and leaves the original node-level UCB asymptotic regret order $O(\sqrt{T\ln T})$ unchanged while potentially improving the leading constant.
\end{proof}

\label{Appendix}

\end{document}